\documentclass{article}

\usepackage{microtype}
\usepackage{graphicx}
\usepackage{subcaption}
\usepackage{booktabs} 

\usepackage{hyperref}

\usepackage[accepted]{icml2026}

\usepackage{amsmath}
\usepackage{amssymb}
\usepackage{mathtools}
\usepackage{amsthm}

\usepackage{hyperref}       
\usepackage{url}            
\usepackage{booktabs}       
\usepackage{amsfonts}       
\usepackage{nicefrac}       
\usepackage{microtype}      
\usepackage{xcolor}         
\usepackage{float}
\usepackage{graphicx}  
\usepackage{wrapfig}
\usepackage[table]{xcolor}
\usepackage{multirow}
\usepackage{enumitem}
\usepackage{acronym}
\usepackage{cleveref}
\usepackage{pifont}
\usepackage[most]{tcolorbox}
\usepackage{amsmath, amssymb, amsthm}
\usepackage{algorithm}
\usepackage{caption} 
\usepackage{wrapfig}

\usepackage{algpseudocode}

\usepackage{pgfplots}
\usepackage{tikz,pgfplots}
\usetikzlibrary{patterns}
\pgfplotsset{compat=1.17}
\usepgfplotslibrary{groupplots}
\usepgfplotslibrary{statistics}

\newcommand{\cmark}{\ding{51}} 
\newcommand{\xmark}{\ding{55}}

\theoremstyle{plain}
\newtheorem{theorem}{Theorem}[section]

\newtheorem{lemma}[theorem]{Lemma}
\newtheorem{corollary}[theorem]{Corollary}
\theoremstyle{definition}

\theoremstyle{remark}

\definecolor{colorFst}{HTML}{C9E6D3}  
\definecolor{colorSnd}{HTML}{E6F0FA}  

\newcommand{\fs}{\cellcolor{colorFst}\bf}   
\newcommand{\nd}{\cellcolor{colorSnd}}  

\usepackage[textsize=tiny]{todonotes}

\icmltitlerunning{SpecFlow for Multimodal Reasoning }

\begin{document}

\twocolumn[
  \icmltitle{Spectral-Progressive Thought Flow for Lightweight Multimodal  Reasoning}



  \icmlsetsymbol{equal}{*}

\begin{icmlauthorlist}
    \icmlauthor{Yixian Shen}{uva,uth}
    \icmlauthor{Zhiheng Yang}{uva}
    \icmlauthor{Qi Bi}{uva}
    \icmlauthor{Changshuo Wang}{ntu} 
        \icmlauthor{Shuai Wang}{uva}
    \icmlauthor{Jia-Hong Huang}{uva,az}
    \icmlauthor{George Floros}{tcd}
    \icmlauthor{Prayag Tiwari}{hh}
    \icmlauthor{Anuj Pathania}{uva}
\end{icmlauthorlist}

\icmlaffiliation{uva}{Informatics Institute, University of Amsterdam, Amsterdam, The Netherlands}
\icmlaffiliation{ntu}{Department of Computer Science, University College London} 
\icmlaffiliation{uth}{University of Thessaly, Volos, Greece}
\icmlaffiliation{tcd}{Department of Electronic and Electrical Engineering, Trinity College Dublin, Dublin, Ireland}
\icmlaffiliation{hh}{School of Information Technology, Halmstad University, Halmstad, Sweden}
\icmlaffiliation{az}{Amazon AGI, Seattle, USA}

\icmlcorrespondingauthor{Anuj Pathania}{a.pathania@uva.nl}


  \icmlkeywords{Machine Learning, ICML}

  \vskip 0.3in
]



\printAffiliationsAndNotice{}  



\begin{abstract}
Multimodal spatial reasoning often relies on long chains of intermediate textual and visual thoughts, where accumulating visual tokens and dense cross-modal attention incur substantial computation and memory overhead.
To address this challenge, we propose Spectral-Progressive Thought Flow (\textit{SpecFlow}), a \emph{novel} lightweight multimodal spatial reasoning framework that represents intermediate visual thoughts in a fixed-size discrete cosine space.
By exploiting strong energy compaction, \textit{SpecFlow} preserves global layout and relational structure while introducing high-frequency details only when increased spatial precision is required.
To align visual state evolution with linguistic intent, classifier-free guidance enables autoregressive textual thoughts to steer flow-based updates of the visual workspace (state) without expanding the context.
As a result, \textit{SpecFlow} maintains a bounded visual workspace whose updates depend only on the current visual state and accumulated textual trace, enabling long-horizon inference with stable latency and memory usage independent of reasoning depth.
Empirical results show that \textit{SpecFlow} achieves competitive or superior reasoning performance while reducing computation and KV cache costs by up to $2.1\times$.
\end{abstract}

\section{Introduction}

\begin{figure}[!t]
    \centering
    \includegraphics[width=\linewidth]{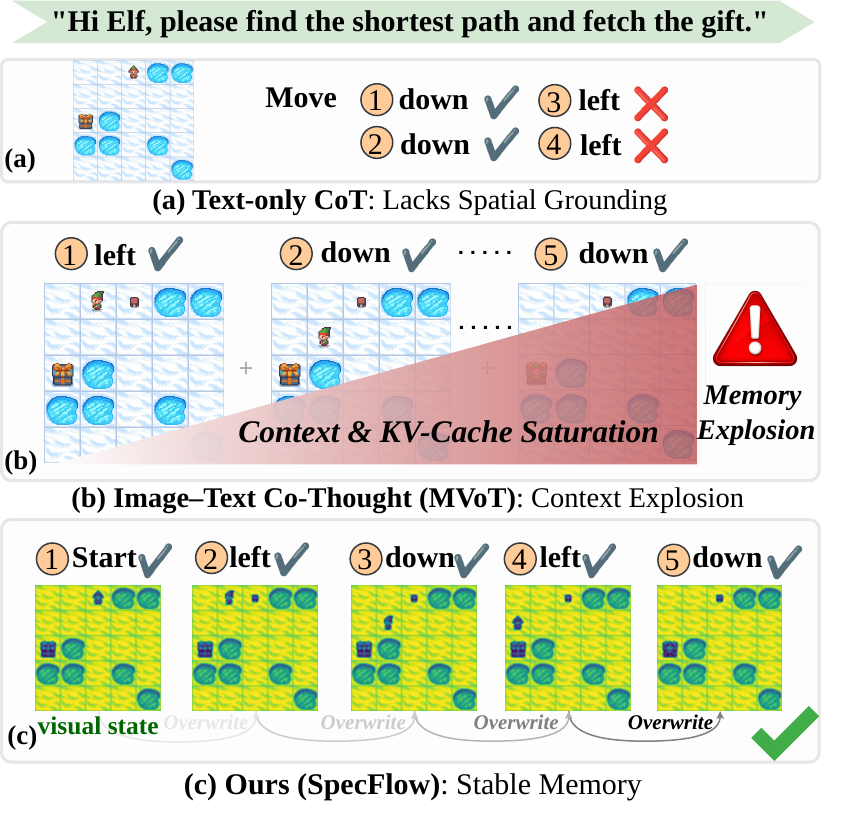}
    \caption{Comparison of multimodal spatial reasoning paradigms. (a) Text-only CoT lacks spatial grounding; (b) Image–Text Co-Thought improves grounding but causes context and KV-cache growth; (c) \textit{SpecFlow} updates a compact spectral visual state, enabling efficient multi-hop reasoning with stable memory.
    }
    \label{fig:motivation}
\end{figure}

Autoregressive multimodal spatial reasoning~\citep{li2025imagine,wu2024mind} extends the chain-of-thought paradigm~\citep{wei2022chain} by interleaving textual and visual tokens, 
which allows the visual tokens ground linguistic reasoning in spatial structure and provides essential cues for tracking layouts, object relationships, and state transitions over time.
This interleaved generation paradigm enables the joint reasoning through language and explicit visual representation, resolving ambiguities that arise in purely textual reasoning.
It has demonstrated prevailing effectiveness to support more robust reasoning, memory, and decision-making, especially in spatial and visuospatial tasks~\cite{hu2024visual}.

However, this paradigm usually requires multi-hop inference, in which the intermediate thoughts (e.g., object layouts, and spatial configurations) are explicitly constructed and reused across successive reasoning steps. 
The repeated generation, which is appended to the context, externalizes the evolving spatial state and leads to the monotonic growing of the sequence length, particularly dominant in the visual modality.
For example, a single MVoT-style intermediate thought can consist of $\mathcal{O}(10^3)$ visual tokens, much more than the corresponding textual tokens (i.e., typically fewer than $\mathcal{O}(10^2)$). 
This can cause the attention computation and key-value caching cost to increase rapidly as the number of reasoning hops grows. 
Unfortunately, such growth, especially in long-horizon inference, quickly exhausts the available context window, amplifies memory bandwidth pressure, and leads to high latency~\cite{li2025imagine}.

Prior work that alleviates context growth can be broadly grouped into two lines.
(1) \emph{Explicit token pruning}. Visual token compression methods~\cite{zhang2024sparsevlm,chen2024image} prune image tokens at inference time, but typically rely on local, heuristic importance criteria and do not account for the dynamics of multi-hop reasoning, where spatial cues must be reused across steps. As a result, single-thought-based pruning may either discard necessary global structure or retain redundant tokens, leading to under- or over-pruning.
(2) \emph{Implicit latent reasoning} methods~\cite{shen2025efficient,su2025token} internalize intermediate thoughts into continuous latent states, but these representations often lack explicit supervision and controllability and typically underperform approaches that externalize visual thoughts for complex spatial reasoning.

We address this bottleneck by rethinking the role of intermediate visual thoughts. 
In contrast to dense visual inputs, intermediate thoughts in spatial reasoning often only need to carry sparse, abstract cues such as global layout and geometric relations, rather than pixel-level detail~\citep{xu2025visual,cheng2024spatialrgpt}. 
Directly generating pixel-space thoughts therefore introduces unnecessary degrees of freedom and cost. 
Our goal is to represent intermediate thoughts as a compact evolving spatial state that can be consumed by subsequent hops.
To this end, we generate intermediate visual thoughts using diffusion-based generation.
Efficient multi-hop reasoning, however, requires stable and deterministic updates with few solver steps.
We instantiate flow matching to learn a velocity field that generates each intermediate visual thought.
Nevertheless, each thought still requires multi-step sampling, which incurs nontrivial computation and slows down multi-hop inference.

To further enable lightweight multi-hop reasoning, we design flow matching directly in the discrete cosine projection space.
Cosine projection is an orthogonal transformation that concentrates most signal energy into low-frequency coefficients.
As shown in Figure~\ref{fig:cosineprofiling}, after projecting visual inputs into the cosine domain~\citep{du2025loca,bi2025adadcp,shen2025macp}, low frequencies primarily encode global layout, while high frequencies capture fine details that are often incidental to reasoning.
This separation allows a compact spectral budget that prioritizes low-frequency structure and activates higher frequencies only when additional spatial detail is required.
Flow matching then transports the frequency-limited visual state in cosine space, updating only the active coefficients conditioned on the current textual thought and the previous visual state.
With classifier-free guidance, the autoregressive textual trace serves as a control signal that steers the flow-based update of the visual workspace, aligning linguistic intent with spatial evolution without appending visual tokens to the context.
Concretely, we propose Spectral Progressive Thought Flow (\textit{SpecFlow}), which maintains a fixed-size visual workspace whose update depends only on the current state, allowing earlier visual thoughts to be discarded.
As a result, memory and computation scale with the textual trace plus a constant-size visual workspace, enabling long-horizon multimodal spatial reasoning with stable memory.

We summarize our main contributions as follows:
\begin{itemize}
    \item We introduce a new multimodal spatial reasoning paradigm that enables parallel visual workspace (state) evolution, eliminating sequential visual token generation and the resulting token accumulation, and decoupling memory usage from reasoning depth.
    \item We formulate intermediate visual thought evolution as a deterministic flow-matching process in cosine space, enabling stable and lightweight hop-wise state updates with only a small and fixed number of solver steps while preserving coarse-to-fine spatial structure.
    \item We integrate classifier-free guidance to steer the flow-based visual updates, allowing long-horizon inference with \emph{stable memory usage} that scales primarily with the textual trace rather than the number of visual hops.
\end{itemize}

\section{Related work}

\textbf{Multimodal Spatial Reasoning (MSR).}
Existing spatial reasoning approaches can be broadly grouped into three paradigms. 
First, \emph{two-stage abstraction methods} convert visual observations into structured symbolic representations, such as text~\citep{zhang2023multimodal}, graphs~\citep{mitra2024compositional,mondal2024kam,ye2026generating}, or bounding boxes~\citep{lei2024scaffolding} before performing reasoning. 
Second, \emph{tool-augmented methods} integrate external components, such as planners, solvers, or verifiers, to support multi-hop reasoning over complex visual scenes~\citep{yao2023react,yang2023mm,hu2024visual,zhou2024image,li2024enhancing,gao2024cantor,shen2026efficient,wang2026reasoning}. 
Third, \emph{unified sequence methods}~\citep{li-etal-2025-vocot,li2025imagine} directly interleave visual and textual tokens within a single autoregressive stream.
While unified models are conceptually appealing, they typically externalize intermediate visual thoughts as long token sequences, which rapidly increases context length and KV-cache footprint, thereby limiting reasoning depth and inference efficiency.

\textbf{Visual Token Compression.}
Existing approaches can be categorized into three paradigms. 
First, \emph{training-based methods}~\citep{zhang2025llava,tong2024flashsloth,raposo2024mixture} incorporate token reduction directly into model architectures or training objectives, but often 
overlook the structured flow of visual information across reasoning hops. 
Second, \emph{training-free methods}~\citep{chen2024image,zhang2024sparsevlm,tan2025tokencarve,li2026keeping} perform token pruning at inference time,
but typically relying on heuristic importance criteria that are not optimized for multi-hop reasoning. 
Third, \emph{continuous latent reasoning} approaches~\citep{hao2024training,cheng2024compressed,deng2024explicit,xu2025softcot,shen2025efficient,su2025token} project intermediate reasoning states into compact continuous latent spaces. 
However, these latent representations usually lack explicit supervision and interpretability, making it difficult to align latent transitions with discrete spatial states, which limits their applicability to multi-hop multimodal spatial reasoning. 

\textbf{Flow Matching and Efficient Generative Transport.}
Flow based generative modeling learns the deterministic transport dynamics from a base distribution to a target distribution.
It enables stable sampling with a small number of integration steps and substantially reduces the iterative overhead of stochastic diffusion~\citep{albergo2022building,liu2022flow,lipman2022flow}. 
In parallel, latent diffusion models~\citep{rombach2022high} have become a standard strategy for scalable high fidelity generation. 
More recently, multimodal diffusion transformers~\citep{esser2024scaling} have further advanced generative modeling by unifying transformer based architectures with diffusion style objectives. 
Despite these advances, prior work primarily targets one-shot generation or generic synthesis, and does not directly address the multi-hop setting where a model must repeatedly generate intermediate visual thoughts under tight memory and latency constraints. 

\section{Proposed Method}
\label{sec:method}

\begin{figure*}
    \centering
    \includegraphics[width=\linewidth]{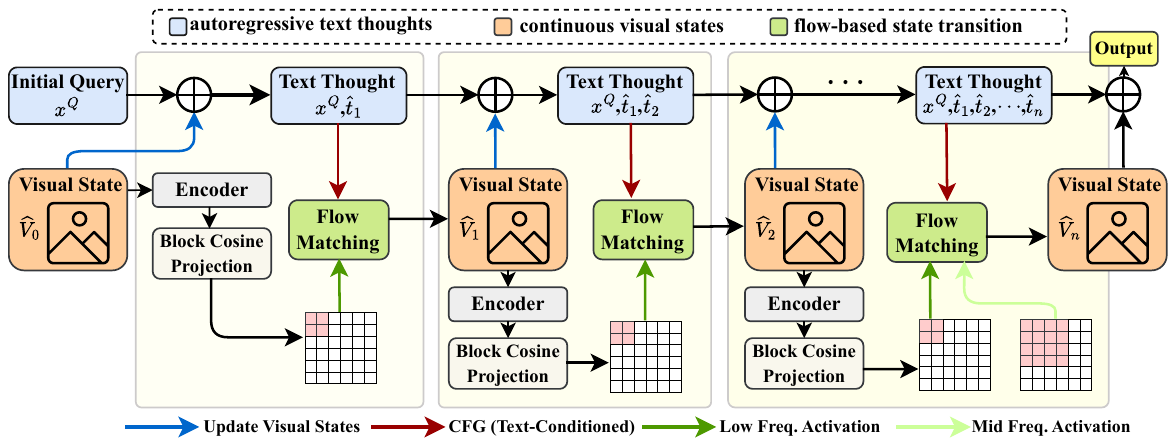}
    \caption{Spectral-Progressive Thought Flow (\textit{SpecFlow}) alternates autoregressive text thoughts with text-conditioned, flow-based updates of a continuous visual state. Visual states are overwritten at each hop and represented in the cosine domain with progressively activated frequency bands, enabling efficient multimodal spatial reasoning without accumulating visual tokens or growing the context length.
}
    \label{fig:specFlow}
\end{figure*}

\subsection{Problem Formulation}
\label{sec:problem_formulation}

Multimodal spatial reasoning (MSR) updates a coherent spatial state and leverages linguistic evidence to perform multi-hop decision-making. 
Let $x^{\mathrm{vis}}$ and $x^{\mathrm{txt}}$ denote the input of an initial visual observation  (e.g., image, and map) and an initial textual instruction or query. 
MSR can require multi-hop inference.
A \emph{hop} is defined as a reasoning cycle that updates a visual thought capturing the current spatial state and then produces the corresponding textual thought.

Different from MVoT-style approaches that autoregressively generate and append dense visual tokens to the context, as depicted in Figure~\ref{fig:specFlow},
we formulate the visual modality as a \emph{fixed-size continuous state} that is \emph{overwritten} at each hop and used only for conditioning. 
Let $\hat{t}_i$ and $\hat{v}_i$ denote the generated textual and visual thoughts at hop $i$, respectively. 
We design a hybrid process in which textual thoughts are generated autoregressively. 
The visual thought is produced via a \emph{state transition} conditioned on the current textual context and the most recent visual state:
\begin{subequations}
\label{eq:hybrid_generation}
\begin{align}
\hat{v}_{i+1} &\sim \mathcal{P}_\theta\!\left(
v_{i+1} \mid x^{\mathrm{vis}}, x^{\mathrm{txt}}, \hat{t}_{\le i}, \hat{v}_i
\right),
\label{eq:hybrid_generation_a}
\\[4pt]
\hat{t}_{i+1} &\sim \mathcal{P}_\theta\!\left(
t_{i+1} \mid x^{\mathrm{vis}}, x^{\mathrm{txt}}, \hat{t}_{\le i}, \hat{v}_{i+1}
\right),
\label{eq:hybrid_generation_b}
\end{align}
\end{subequations}
where $\hat{t}_{\le i} = \{\hat{t}_1,\dots,\hat{t}_i\}$. Eq.~\eqref{eq:hybrid_generation_a} imposes a Markov property on the visual stream.
The next visual thought depends on the latest visual state and the accumulated textual trace, but does not require all past visual thoughts in the context. 
Eq.~\eqref{eq:hybrid_generation_b} keeps language reasoning autoregressive and conditions each new textual thought on the updated visual state, so that spatial cues directly influence subsequent reasoning. 


\subsection{Spectral-Progressive Frequency Allocation}
\label{sec:spectral_alloc}

Multimodal spatial reasoning can involve multi-hop inference.
The generation of intermediate visual thoughts across hops can lead to the monotonic growing of sequence length and computational cost, particularly for the visual modality with dense pixels. 
Block cosine transformation exhibits a strong energy compaction, where a large fraction of signal energy concentrates in a small set of low-frequency coefficients. 
Retaining only the low-frequency components is often sufficient to recover the global layout and coarse spatial structure of the scene, as illustrated in Fig.~\ref{fig:cosineprofiling}. 
In contrast, the high-frequency components primarily encode appearance variations, which are less critical for multi-hop spatial reasoning and can therefore be deferred to later stages.

Concretely, we employ a block cosine projection to the visual state. Let $\mathcal{D}_b(\cdot)$ denote the forward projection with a block size $b\times b$ and let $\mathcal{D}_b^{-1}(\cdot)$ denote its inverse. 
We allocate a progressively expanding spectral budget over the flow time $t\in\{0,1,\ldots,T\}$ using a binary mask $M_t\in\{0,1\}^{b\times b}$ over intra-block frequencies, which is shared across blocks. 
The number of active frequency modes is
\begin{equation}
m(t) \;=\; \sum_{u=0}^{b-1}\sum_{v=0}^{b-1} M_t(u,v),
\label{eq:mt_def}
\end{equation}
where $m(t)$ is chosen to be non-decreasing in $t$. At early times, $M_t$ retains only a small low-frequency region (e.g., the DC term and a few neighboring modes), yielding a compact representation that preserves coarse geometry while discarding fine detail.
As $t$ increases, $M_t$ progressively unblocks mid- and high-frequency bands to enable refinement. 
Given an image $x$, the frequency-limited reconstruction at time $t$ is
\begin{equation}
\hat{x}_t \;=\; \mathcal{D}_b^{-1}\!\Big(M_t \odot \mathcal{D}_b(x)\Big),
\label{eq:masked_recon}
\end{equation}
where $\odot$ denotes element-wise multiplication with broadcasting of $M_t$ over blocks. 
Eq.~\eqref{eq:masked_recon} indicates that early states are structure-dominant, while later states become progressively sharper as more frequencies are activated.

This spectral schedule induces a hierarchical coarse-to-fine curriculum aligned with multi-hop reasoning.
Specifically, low frequencies stabilize spatial relations, 
and high frequencies add texture only after the configuration is consistent. 
In addition to a fixed schedule, the spectral budget can be made text-adaptive by mapping the current textual context to a target budget. 
We keep this mechanism lightweight and defer its details to the text-guidance formulation in Sec.~\ref{sec:cfg}. 

\begin{figure*}
    \centering
    \includegraphics[width=\linewidth]{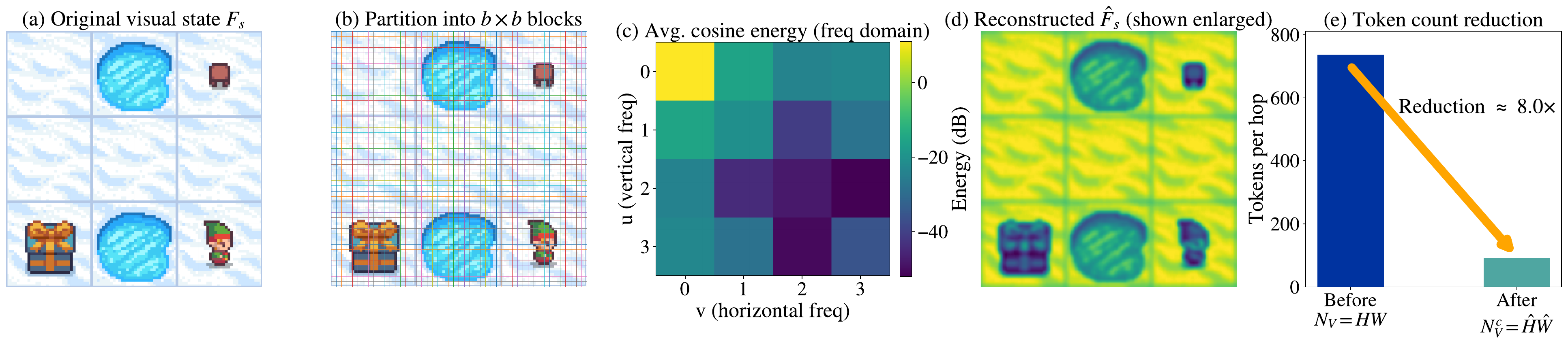}
    \caption{\textbf{Spectral-progressive frequency allocation with block cosine projection.}
(a) An intermediate visual state.
(b) Partition into $b\times b$ blocks and employ block cosine projection.
(c) Average coefficient energy concentrates in low frequencies.
(d) Reconstruction using only the low-frequency bands preserves global layout.
(e) Retaining a small subset of coefficients yields significant visual-token reduction per hop.}
    \label{fig:cosineprofiling}
\end{figure*}

\subsection{Cosine-Space Flow Matching}
\label{sec:flowcosine}

The visual thought is invoked repeatedly across hops, so its per-hop cost must be tightly controlled. 
A stable deterministic inference within a small number of solver steps is therefore demanded.
\textit{SpecFlow} resolves this by performing flow matching directly in the discrete cosine space under a spectral mask, yielding smoother dynamics and requiring fewer solver steps.
Specifically, the early integration steps operate on a small subset of low-frequency coefficients that capture global layout, while higher-frequency coefficients are activated only when fine details are necessary.

Building on the cosine-domain visual representation introduced in Sec.~\ref{sec:spectral_alloc}, 
we model the evolution of visual thoughts as a deterministic continuous-time flow in coefficient space. 
For a continuous time variable $t\in[0,1]$, we define a coefficient trajectory $X(t)$, governed by
\begin{equation}
\frac{dX(t)}{dt} \;=\; u_\theta\!\left(\widetilde{X}(t),\, t,\, c\right),
\label{eq:cosine_ode}
\end{equation}
where $u_\theta$ is a learnable velocity field and $c$ denotes the conditioning signal (e.g., the current textual thought). 
The state fed to the velocity field is spectrally filtered as
\begin{equation}
\widetilde{X}(t) \;=\; M(t)\odot X(t),
\label{eq:masked_coeff}
\end{equation}
where $M(t)\in\{0,1\}^{b\times b}$ is a mask schedule that expands the active intra-block frequency set from low to high frequencies as $t$ increases, and $\odot$ denotes element-wise multiplication with broadcasting over blocks. 
This construction ensures that early dynamics focus on coarse structure and later dynamics refine higher-frequency details.

\paragraph{Flow matching objective in cosine space.}
To train $u_\theta$, we adopt a standard flow-matching objective defined by end-point pairs. 
Let $x_0$ be a data sample and let $x_1\sim\mathcal{N}(0,I)$ be a noise sample of the same shape. 
We transform both endpoints into cosine coefficients, $X_0=\mathcal{D}_b(x_0)$ and $X_1=\mathcal{D}_b(x_1)$, and sample an intermediate point by linear interpolation in coefficient space:
\begin{equation}
X_t \;=\; (1-t)\,X_1 \;+\; t\,X_0,
\label{eq:flow_interp}
\end{equation}
which induces the constant target velocity via
\begin{equation}
\dot{X}_t \;\triangleq\; \frac{dX_t}{dt} \;=\; X_0 - X_1.
\label{eq:target_velocity}
\end{equation}
We train the model to predict this velocity from the masked intermediate state and time index:
\begin{equation}
\mathcal{L}_{\mathrm{FM}}
\;=\;
\mathbb{E}_{x_0,\,x_1,\,t}
\Big[
\big\|
u_\theta\!\left(M(t)\odot X_t,\, t,\, c\right)
-
\left(X_0 - X_1\right)
\big\|_2^2
\Big].
\label{eq:flow_loss}
\end{equation}
Because the coefficients are explicitly ordered by frequency, $M(t)$ exposes only the currently active bands and the learned dynamics are naturally coarse-to-fine.
When $M(t)$ retains only low frequencies, the model learns to rapidly establish global layout. 
As $M(t)$ expands, the model learns to refine mid- and high-frequency structure.


\paragraph{Fast deterministic inference with spectral constraints.}
At inference time, each hop is initialized from the current visual workspace rather than from fresh noise. Specifically, at hop $i$, we project the previous visual state into the cosine domain as $X_i^{(0)}=\mathcal{D}_b(\hat{v}_i)$, and integrate the text-guided ODE in Eq.~\eqref{eq:cosine_ode} from $t=0$ to $t=1$ using a fixed-step solver, such as Euler. The resulting coefficient state $X_i^{(1)}$ is then mapped back by the inverse cosine transform to obtain the updated visual workspace, computed as $\hat{v}_{i+1}=\mathcal{D}_b^{-1}\!\big(X_i^{(1)}\big)$.


As only a subset of frequency bands is active at any given time, the high-quality samples can therefore  be obtained with a small number of solver steps. 
Consequently, each hop updates the overwritten visual workspace efficiently while respecting the evolving spectral budget defined by $M(t)$.
The stable and coherent thought flow sharpens progressively over the integration time, fitting multi-hop reasoning.

\subsection{Text-Guided Conditioning via CFG}
\label{sec:cfg}

Cosine-space flow matching specifies \emph{how} the visual workspace evolves under spectral constraints, but multi-hop reasoning also requires controlling \emph{what} the visual thoughts depict to align with the current textual intent.
We therefore condition the velocity field on a hop-specific text context. Let $c_i$ denote the conditioning signal at hop $i$, which expresses the original query together with the generated textual trace up to hop $i$. 
To enable classifier-free guidance (CFG), we train the same velocity model to handle both conditional and unconditional inputs by randomly dropping the condition during training.
With probability $p_{\mathrm{drop}}$, we replace $c_i$ with a null token $\varnothing$, so the model learns both $u_\theta(\cdot,t,c_i)$ and $u_\theta(\cdot,t,\varnothing)$.

At inference time, we strengthen the influence of text by combining conditional and unconditional velocities at the \emph{velocity level}. Given the masked coefficient state $\widetilde{X}(t)=M(t)\odot X(t)$, we form the guided velocity as
\begin{equation}
\label{eq:cfg_velocity}
\begin{aligned}
& u_{\theta}^{\mathrm{guid}}\!\left(X(t), t; c_i\right)
\;=\;
u_\theta\!\left(M(t)\odot X(t),\, t,\, \varnothing\right)
\\
& \;+\;
w\,u_\theta\!\left(M(t)\odot X(t),\, t,\, c_i\right)
\\
&\;-\;
w\,u_\theta\!\left(M(t)\odot X(t),\, t,\, \varnothing\right).
\end{aligned}
\end{equation}
where $w>0$ is the guidance scale. 
The difference term isolates the text-attributable direction of change in the velocity field, and scaling it increases semantic compliance while retaining the unconditional prior for stability.
We then integrate the guided ODE via
\begin{equation}
\frac{dX(t)}{dt} \;=\; u_{\theta}^{\mathrm{guid}}\!\left(X(t), t; c_i\right)
\label{eq:ode_guid}
\end{equation}
from $t=0$ to $t=1$ to obtain the updated coefficient state $X(1)$ for hop $i$. 
The corresponding pixel-space visual thought is reconstructed by the inverse discrete cosine transform, given by $v_i \;=\; \mathcal{D}_b^{-1}\!\left(X(1)\right)$.

Finally, the guided visual thought $v_i$ provides explicit spatial cues for the next language update. After generating $v_i$ at hop $i$, we generate the next textual thought autoregressively conditioned on the textual history and the current visual state, given by
\begin{equation}
t_{i+1} \sim \mathcal{G}\!\left(t \,\middle|\, x^{\mathrm{txt}},\, t_{\le i},\, v_i\right).
\label{eq:text_next}
\end{equation}

In this way, text steers the cosine-space flow via CFG to produce a spatially grounded visual thought, and the resulting visual state in turn conditions subsequent text, improving cross-hop consistency.

\section{Experiments}
\label{sec:exp_spec}

\begin{table*}[!t]
\centering
\small
\setlength{\tabcolsep}{5pt}
\renewcommand{\arraystretch}{1.25}
\caption{Comparison on multimodal spatial reasoning. $^\uparrow$ indicates higher is better; $^\downarrow$ indicates lower is better. \emph{SpecFlow} achieves competitive accuracy while significantly reducing FLOPs, latency, and memory}
\resizebox{0.79\textwidth}{!}{
\begin{tabular}{lcccc|cccc}
\toprule
\multirow{3}{*}[-1ex]{\textbf{Methods}} 
& \multicolumn{8}{c}{\cellcolor{gray!17} \textit{Compositional tasks, e.g., spatial reasoning and visual search}} \\
\cmidrule(lr){2-9}
& \multicolumn{4}{c|}{\textbf{VSR}~\citep{liu2023visual}} & \multicolumn{4}{c}{\textbf{V-Star}~\citep{wu2024v}} \\
\cmidrule(lr){2-5} \cmidrule(lr){6-9}
& Acc.(\%)$^\uparrow$ & FLOPs(G)$^\downarrow$ & Lat.(s)$^\downarrow$ & Mem.(GB)$^\downarrow$
& Acc.(\%)$^\uparrow$ & FLOPs(G)$^\downarrow$ & Lat.(s)$^\downarrow$ & Mem.(GB)$^\downarrow$ \\
\midrule
VoCoT            
& {\nd 68.88} & 20342.40 & 0.65 & 56.37 
& {\nd 59.87} & 22334.01 & 0.71 & 59.28 \\
\emph{SoT}         
& 54.56 & 17675.04 & 0.59 & 52.62 
& 38.06 & 20319.23 & 0.66 & 56.53 \\
\emph{LightFastV}  
& 58.63 & 16280.49 & 0.55 & 49.65 
& 46.83 & 18802.51 & 0.60 & 53.32 \\
\emph{SparseVLM}   
& 63.88 & 15200.29 & 0.50 & 45.75 
& 57.36 & 17802.00 & 0.54 & 51.66 \\
\emph{Heima}       
& 51.69 & {\fs 10394.46} & {\fs 0.40} & {\fs 38.84} 
& 41.70 & 14314.26 & {\fs 0.42} & {\fs 40.62} \\
\emph{PCCoT}       
& 54.71 & {\nd 10654.00} & 0.42 & 39.79 
& 44.40 & {\fs 13963.49} & 0.50 & 42.31 \\
\emph{CODI}        
& 68.82 & 11099.76 & 0.46 & {\nd 39.02} 
& 48.43 & 14279.39 & 0.48 & 42.65 \\
\textbf{\emph{SpecFlow}}     
& {\fs 70.14} & 11169.74 & {\nd 0.41} & 39.53 
& {\fs 61.28} & {\nd 13985.48} & {\nd 0.45} & {\nd 41.22} \\
\midrule
\cmidrule(lr){2-9}
\multirow{2}{*}[-1ex]{\textbf{Methods}} 
& \multicolumn{4}{c|}{\textbf{EmbSpatial}~\citep{du2024embspatial}} & \multicolumn{4}{c}{\textbf{Winoground}~\citep{thrush2022winoground}} \\
\cmidrule(lr){2-5} \cmidrule(lr){6-9}
& Acc.(\%)$^\uparrow$ & FLOPs(G)$^\downarrow$ & Lat.(s)$^\downarrow$ & Mem.(GB)$^\downarrow$
& Acc.(\%)$^\uparrow$ & FLOPs(G)$^\downarrow$ & Lat.(s)$^\downarrow$ & Mem.(GB)$^\downarrow$ \\
\midrule
VoCoT            
& 59.76 & 27211.34 & 0.74 & 61.52 
& {\nd 70.09} & 28092.92 & 0.80 & 62.23 \\
\emph{SoT}         
& 56.14 & 24784.89 & 0.68 & 58.05 
& 64.48 & 26639.10 & 0.74 & 60.16 \\
\emph{LightFastV}  
& 58.32 & 23482.94 & 0.64 & 55.11 
& 65.83 & 25253.62 & 0.69 & 56.75 \\
\emph{SparseVLM}   
& {\nd 63.89} & 20785.15 & 0.74 & 51.93 
& 66.01 & 23738.55 & 0.79 & 52.88 \\
\emph{Heima}       
& 54.66 & {\fs 17607.64} & {\fs 0.42} & {\nd 43.78} 
& 65.64 & 18259.34 & {\nd 0.50} & {\fs 43.09} \\
\emph{PCCoT}       
& 56.13 & {\nd 17719.32} & 0.49 & 43.84 
& 67.49 & {\fs 17556.18} & {\fs 0.49} & 53.33 \\
\emph{CODI}        
& 55.74 & 18584.17 & {\nd 0.44} & 44.22 
& 68.91 & {\nd 17985.55} & {\nd 0.50} & 50.21 \\
\textbf{\emph{SpecFlow}}     
& {\fs 67.79} & 17731.50 & {\nd 0.44} & {\fs 42.57} 
& {\fs 70.47} & 18390.14 & {\fs 0.49} & {\nd 46.94} \\
\bottomrule
\end{tabular}
}
\label{tab:result_spatial}
\end{table*}

\begin{table*}[!t]
\centering
\small
\setlength{\tabcolsep}{4pt}
\renewcommand{\arraystretch}{1.25}
\caption{
Performance comparison on spatial decision-making tasks across increasing environment sizes.
All values denote success rate (\%).
GRPO results are reported for open-source MLLMs following prior work.
}
\resizebox{0.98\textwidth}{!}{
\begin{tabular}{l|c|cccc|cccc|cccc|c}
\toprule
\multirow{2}{*}{\textbf{Model}} 
& \multirow{2}{*}{\textbf{Setting}} 
& \multicolumn{4}{c|}{\textbf{Maze}~\citep{ivanitskiy2023configurable}} 
& \multicolumn{4}{c|}{\textbf{MiniBehavior}~\citep{jin2023mini}} 
& \multicolumn{4}{c|}{\textbf{FrozenLake}~\citep{wu2024vsp}} 
& \multirow{2}{*}{\textbf{Avg}} \\
\cline{3-6} \cline{7-10} \cline{11-14}
& 
& 4 & 8 & 12 & 16
& 8 & 12 & 16 & 20
& 4 & 8 & 12 & 16
& \\
\midrule

\multicolumn{15}{c}{\cellcolor{gray!17} \textit{Closed-Source MLLMs}} \\
\midrule
Gemini-3-Flash & Naive
& \fs{94.38} & \nd{91.12} & \nd{76.94} & \nd{61.27}
& \fs{92.71} & \nd{90.36} & \nd{78.02} & \nd{70.19}
& \nd{86.58} & \fs{81.74} & \nd{62.31} & \nd{46.42}
& \nd{77.67} \\
GPT-5.1 & Naive
& 91.76 & 88.94 & 72.83 & 58.91
& 89.82 & 87.45 & 73.96 & 66.14
& 83.27 & 79.68 & 58.74 & 42.85
& 74.53 \\

\midrule
\multicolumn{15}{c}{\cellcolor{gray!17} \textit{Open-Source MLLMs}} \\
\midrule
Qwen3-VL-8B & Naive
& 61.72 & 42.75 & 31.39 & 21.07
& 63.74 & 46.28 & 34.41 & 23.96
& 55.21 & 37.25 & 22.37 & 17.21
& 38.11 \\
Qwen3-VL-8B & SFT
& 78.92 & 61.47 & 48.63 & 36.85
& 80.34 & 63.18 & 50.27 & 38.94
& 71.46 & 53.72 & 37.86 & 29.41
& 54.25 \\
Qwen3-VL-8B & GRPO
& 75.31 & 57.88 & 44.92 & 33.27
& 76.85 & 59.41 & 46.38 & 35.02
& 68.02 & 50.21 & 34.56 & 26.98
& 50.73 \\

\midrule
Qwen3-VL-32B & Naive
& 72.18 & 54.36 & 41.92 & 29.84
& 74.05 & 56.71 & 44.28 & 31.63
& 65.47 & 48.12 & 33.06 & 24.58
& 48.02 \\
Qwen3-VL-32B & SFT
& 86.21 & 70.84 & 58.97 & 46.35
& 87.43 & 72.16 & 60.41 & 48.72
& 79.58 & 62.91 & 47.36 & 38.24
& 63.27 \\
Qwen3-VL-32B & GRPO
& 83.02 & 67.31 & 55.26 & 42.17
& 84.18 & 69.04 & 56.78 & 44.35
& 76.43 & 59.47 & 44.02 & 35.91
& 59.83 \\

\midrule
\multicolumn{15}{c}{\textit{\cellcolor{gray!17} Multimodal Interleave Reasoning}} \\
\midrule
MVoT & Autoregressive
& 92.03 & 88.61 & 72.94 & 57.81
& 89.14 & 86.78 & 73.88 & 65.23
& 83.31 & 78.94 & 58.62 & 41.87
& 74.10 \\
DiffThinker & Flow(image) Only
& 93.10 & 89.95 & 74.86 & 59.42
& 90.82 & 88.31 & 75.21 & 66.58
& 84.96 & 79.83 & 60.41 & 43.62
& 75.59 \\
\textit{SpecFlow} (Ours) & Flow(image) + AR(Text)
& \nd{94.12} & \fs{92.04} & \fs{78.31} & \fs{64.85}
& \nd{92.66} & \fs{91.27} & \fs{79.84} & \fs{72.96}
& \fs{87.94} & \nd{81.73} & \fs{64.72} & \fs{49.38}
& \fs{79.15} \\
\bottomrule
\end{tabular}
}
\label{tab:specflow_main_results}
\end{table*}

\noindent \textbf{Benchmark.}
We evaluate \textit{SpecFlow} on a diverse set of benchmarks covering multi-hop multimodal spatial reasoning and spatial decision-making (see Appendix~\ref{sec:additional_results}). 
These benchmarks span varying reasoning depths and spatial complexities, enabling a systematic evaluation of long-horizon reasoning accuracy and robustness under increasing difficulty.
For more details of these benchmarks, please refer to Appendix~\ref{sec:appendix-datasets}.
For all benchmarks, we adopt the official task definitions and difficulty splits established in prior work~\citep{li2025imagine,li-etal-2025-vocot}.

\noindent \textbf{Model Specification.}
\textit{SpecFlow} couples a flow-matching (FM) \emph{visual-thought} generator in VAE latent space with an autoregressive (AR) text module.
The VAE latent is transformed to cosine space and progressively activated by a time-dependent spectral mask $M(t)$ from low to high frequencies. At each reasoning hop, the AR module produces a compact textual thought/plan to condition FM generation, while the visual thought is generated non-autoregressively via flow matching. Unless stated otherwise, SpecFlow uses Qwen3-VL-8B together with QwenImage-Edit-2509 (20B)~\cite{esser2024scaling,wu2025qwen}. To exclude confounding effects from parameter scaling, we additionally report strong AR baselines with Qwen3-VL-32B.

\noindent \textbf{Baselines.} We compare \textit{SpecFlow} against baselines spanning \textit{prompt-based}, \textit{heuristic}, and \textit{latent-space} token-compression paradigms, including \emph{SoT}~\citep{aytes2025sketch}; \emph{LightFast}, which combines \emph{LightThinker}~\citep{zhang2025lightthinker} for textual pruning and \emph{FastV}~\citep{chen2024image} for visual pruning; and latent reasoning approaches such as \emph{Heima}~\citep{shen2025efficient}, \emph{CODI}~\citep{shen2025codi}, and \emph{PCCoT}~\citep{wu2025parallel}. We further include \emph{SparseVLM}~\citep{zhang2024sparsevlm}, which iteratively sparsifies visual tokens to reduce computation, as well as \emph{DiffThinker}~\citep{he2025diffthinker}, which performs reasoning purely in the image domain.

\noindent \textbf{Training and Inference Settings.}
We use a flow-matching objective with linear interpolation between data latents and Gaussian noise, 
MSE velocity supervision and classifier-free guidance (CFG) applied to the predicted velocity. 
We adopt a first-order Euler ODE solver with a fixed number of steps $T{=}5$ and CFG scale $w{=}4$, and sample time steps from a logit-normal distribution. 
The FM model is fine-tuned using LoRA (rank 32) for 5 epochs with learning rate $1{\times}10^{-4}$. For AR supervised fine-tuning (SFT), we use 5 epochs with learning rate $1{\times}10^{-4}$, LoRA rank 32; for GRPO, we train for 1 epoch with learning rate $1{\times}10^{-6}$, rollout size $n{=}4$, KL coefficient $1{\times}10^{-2}$. 


\begin{figure}[!t]
\begin{tikzpicture}
\footnotesize
\begin{groupplot}[
    group style={
        group size=2 by 2, 
        vertical sep=1.5cm, 
    },
    width=4.42cm,
    height=3.5cm,
    ylabel style={align=center},
    xlabel style={yshift=-5pt},
    every axis plot/.append style={line width=1pt},
]

\nextgroupplot[
    ylabel=KV cache(GB),font=\scriptsize,
    ylabel style={align=center, xshift=-5pt},
    xtick={0.4, 0.6, 0.8, 1.0}, 
    xmin=0.3, xmax=1.1,
    xticklabels={VSR,V-Star,EmbSpatial,Winoground}, 
    xticklabel style={rotate=25, anchor=east, font=\scriptsize,xshift=0.3em}, 
    bar width=4pt,
    ybar,
    ymin=2, ymax=6.2 
]
\addplot[blue, fill=white!30, postaction={pattern=north west lines},pattern color=blue] coordinates { (0.4,2.97) (0.6,3.28) (0.8,3.52) (1,3.20)};
\addplot[magenta, fill=white!30] coordinates { (0.4,5.45) (0.6,5.71) (0.8,5.92) (1,5.77)};

\draw[densely dotted, thick, |->]
  (axis cs:0.42,5.45) -- (axis cs:0.36,2.97);
\draw[densely dotted, thick, |->]
  (axis cs:0.62,5.71) -- (axis cs:0.56,3.28);
\draw[densely dotted, thick, |->]
  (axis cs:0.82,5.92) -- (axis cs:0.76,3.52);
\draw[densely dotted, thick, |->]
  (axis cs:1.02,5.77) -- (axis cs:0.96,3.20);

\node[red] at (axis cs:0.38,5.9) {$\downarrow$1.7$\times$};
\node[red] at (axis cs:0.57,5.9) {$\downarrow$1.6$\times$};
\node[red] at (axis cs:0.77,5.9) {$\downarrow$1.6$\times$};
\node[red] at (axis cs:0.97,5.9) {$\downarrow$1.8$\times$};

\nextgroupplot[
    ylabel=KV cache(GB),font=\scriptsize,
    xtick={ 0.4, 0.6, 0.8}, 
    xmin=0.3, xmax=0.9,
    xticklabels={FrozenLake,MiniBehavior, Maze }, 
    xticklabel style={rotate=25, anchor=east, font=\scriptsize,xshift=0.3em}, 
    bar width=4pt,
    ybar,
    ymin=1.5, ymax=5 
]
\addplot[blue, fill=white!30, postaction={pattern=north west lines},pattern color=blue] coordinates {(0.4,2.28) (0.6,2.21) (0.8,2.13) };
\addplot[orange, fill=white!30, postaction={pattern=horizontal lines},pattern color=orange] coordinates {(0.4,4.61) (0.6,4.52) (0.8,4.43)};

\draw[densely dotted, thick, line width=1pt, |->]
  (axis cs:0.42,4.61) -- (axis cs:0.37,2.28);
\node[red] at (axis cs:0.37,4.7) {$\downarrow$2.1$\times$};

\draw[densely dotted, thick, line width=1pt, |->]
  (axis cs:0.62,4.52) -- (axis cs:0.57,2.21);
\node[red] at (axis cs:0.52,4.7) {$\downarrow$2.1$\times$};

\draw[densely dotted, thick, line width=1pt, |->]
  (axis cs:0.82,4.46) -- (axis cs:0.77,2.13);
\node[red] at (axis cs:0.72,4.7) {$\downarrow$2.0$\times$};

\end{groupplot}

\node at ($(group c1r1.north east)!0.5!(group c2r1.north west) + (0cm, 0.3cm)$) {
    \begin{tabular}{@{\hspace{-10pt}}ccccc@{\hspace{-10pt}}}
         \tikz\draw[draw=magenta, fill=white!30  ] (0,0) rectangle (5pt,5pt); \scriptsize {\em VoCoT}&
        \tikz\draw[draw=blue, fill=blue!30, pattern=north west lines, pattern color=blue] (0,0) rectangle (5pt,5pt); \scriptsize {\em SpecFlow} &
      
        \tikz\draw[draw=orange, fill=orange!30, pattern=horizontal lines, pattern color=orange] (0,0) rectangle (5pt,5pt); \scriptsize MVoT \\
    \end{tabular}
};

\node at ($(group c1r1.south) + (0,-1cm)$) {\scriptsize (a) Multimodal spatial reasoning};
\node at ($(group c2r1.south) + (0,-1cm)$) {\scriptsize (b) Dynamic multimodal spatial reasoning};

\end{tikzpicture}
    \caption{\small KV-cache memory across benchmarks.
\emph{SpecFlow} achieves $1.6\times$–$2.1\times$ KV-cache reduction over baselines.}
\label{fig:perf}
\end{figure}

\begin{figure}
    \centering
    \includegraphics[width=\linewidth]{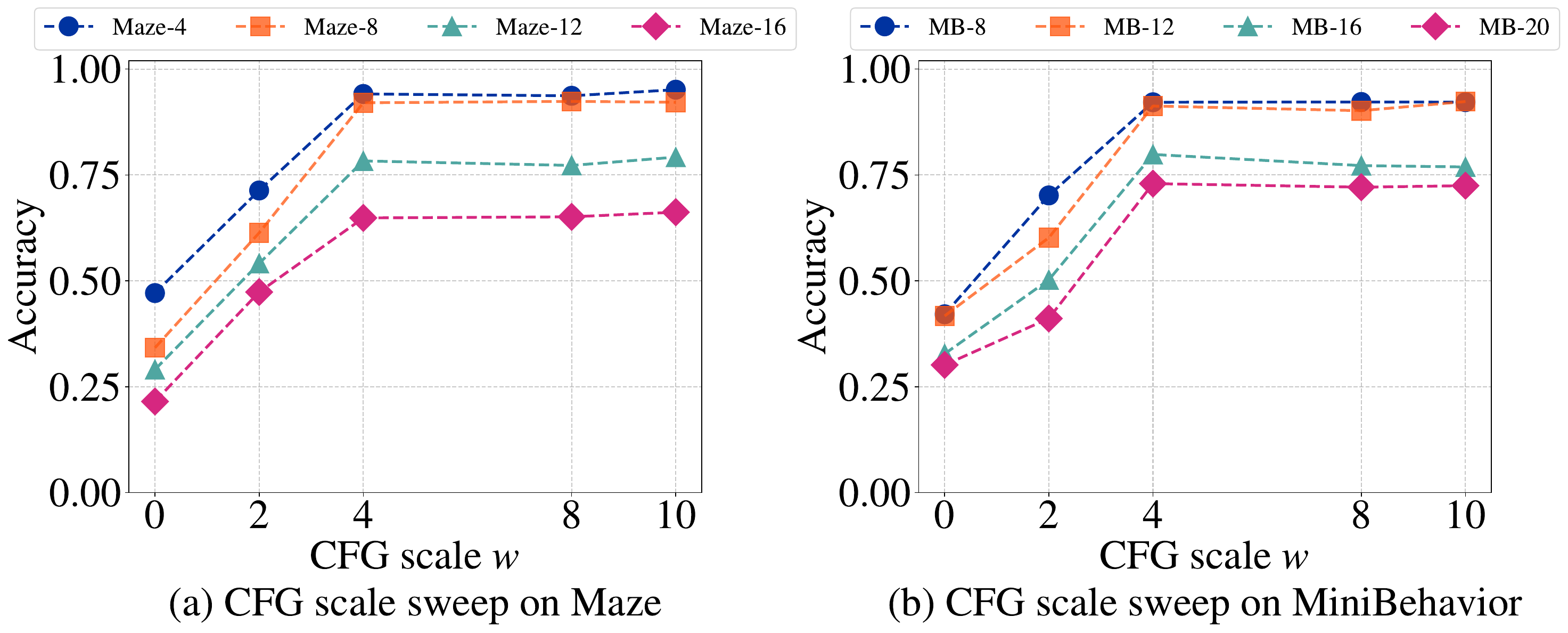}
    \caption{Effect of CFG guidance scale on reasoning accuracy, Performance improves with increasing guidance strength and peaks at a moderate scale ($w=4$), while excessively large guidance yields diminishing returns due to over-deterministic conditioning.}
    \label{fig:cfg}
\end{figure}

\subsection{Results on Compositional Multimodal Reasoning}

Table~\ref{tab:result_spatial} summarizes results on four spatially grounded benchmarks covering claim verification (\textbf{VSR}~\citep{liu2023visual}), target-centric search (\textbf{V-Star}~\citep{wu2024v}), relational grounding in complex scenes (\textbf{EmbSpatial}~\citep{du2024embspatial}), and compositional image--caption alignment (\textbf{Winoground}~\citep{thrush2022winoground}). 
Across these settings, \textit{SpecFlow} matches or exceeds the best-performing baselines while operating in a substantially lighter inference regime. 
On VSR and V-Star, \textit{SpecFlow} attains the top accuracy and reduces both latency and memory by roughly one-third relative to VoCoT, avoiding the long-context cost of prompt-heavy interleaving. 
Compared with lightweight latent-reasoning approaches such as Heima and PCCoT, \textit{SpecFlow} operates in a similar low-latency regime but yields markedly higher accuracy, with +18.5 on VSR over Heima and +16.9 on V-Star over PCCoT. 
 On more challenging embodied and grounding benchmarks, \textit{SpecFlow} improves over the strongest prior baseline on EmbSpatial by a clear margin (67.79 vs.\ 63.89 of SparseVLM), and also surpasses VoCoT on Winoground (70.47 against 70.09), and, crucially, these gains come with low latency and reduced memory footprint with faster execution time.

\subsection{Results on Spatial Decision Making} 

Table~\ref{tab:specflow_main_results} compares \textit{SpecFlow} with both open- and closed-source baselines on three spatial decision-making benchmarks.
On \textbf{Maze}~\citep{ivanitskiy2023configurable}, \textit{SpecFlow} ranks first across all grid sizes; the advantage becomes most visible at larger layouts, reaching 64.85\% on size 16, where autoregressive baselines degrade sharply with horizon length.
\textbf{MiniBehavior}~\citep{jin2023mini} shows a similar scaling pattern: \textit{SpecFlow} remains strong at the largest environment, achieving 72.96\% on size 20, which maintains object-centric subgoals over long action sequences. 
On \textbf{FrozenLake}~\citep{wu2024vsp}, where early mistakes cascade under sparse-reward navigation, \textit{SpecFlow} again leads and improves over MVoT as well as Qwen3-VL baselines trained with SFT or GRPO. 
Compared with image-only flow reasoning, the gains are clearer at larger layouts: against DiffThinker, \textit{SpecFlow} improves the average success from 75.59 to 79.15, and raises Maze-16 from 59.42 to 64.85. 
Notably, these gains are not explained by AR scale: with a smaller AR backbone than Qwen3-VL-32B, \textit{SpecFlow} attains a higher average success rate, 79.15 vs.\ 63.27, under the same evaluation protocol.



\subsection{KV-cache Efficiency}
Figure~\ref{fig:perf} reports KV-cache memory consumption. 
On compositional multimodal spatial reasoning tasks, \textit{SpecFlow} consistently reduces KV-cache usage to approximately 3.0--3.5\,GB, compared to 5.4--5.9\,GB for VoCoT, corresponding to a $1.6\times$–$1.8\times$ reduction. 
This improvement stems from its non-autoregressive visual-thought generation, which prevents KV growth across reasoning steps. 
The advantage is larger on dynamic spatial decision-making tasks; \textit{SpecFlow} reduces KV-cache by about $2.1\times$ relative to MVoT and stays near 2.1--2.3\,GB across tasks. 
Notably, these gains are consistent across benchmarks with varying reasoning depth and modality interaction patterns, indicating that its KV efficiency is not task-specific but intrinsic to its flow-based generation paradigm.

\begin{table}[!t]
\centering
\caption{Effect of discrete 3-band spectral scheduling.
Accuracy $\uparrow$ , latency(min) $\downarrow$ and FLOPS(G)$\downarrow$  are reported per episode.
}
\label{tab:spectral_schedule}
\renewcommand{\arraystretch}{1.25}
\resizebox{0.43\textwidth}{!}{
\begin{tabular}{lcccc}
\toprule
\textbf{Environment} & \textbf{Schedule} & \textbf{Accuracy $\uparrow$} & \textbf{Latency $\downarrow$} &  \textbf{FLOPS(G) $\downarrow$}\\
\midrule
\multirow{3}{*}{Maze}
& Linear  & \fs94.37 & 1.96 & 16752.23\\
& Cosine & \nd94.12 & \nd1.29 & \nd13976.71\\
& Fixed  & 90.39 & \fs1.13 & \fs 12723.34\\
\midrule
\multirow{3}{*}{FrozenLake}
& Linear  & \nd87.79 & 2.97 & 18265.11\\
& Cosine & \fs87.94 & \nd1.73 & \nd 15991.07 \\
& Fixed  & 82.37 & \fs1.42  &\fs13672.52 \\
\bottomrule
\end{tabular}
}
\vspace{-1mm}
\end{table}

\subsection{Ablation Studies}

\paragraph{Impact of Discrete Spectral
Scheduling.}
We ablate the proposed spectral-progressive strategy by comparing linear, cosine, and fixed schedules, where the fixed variant activates only low-frequency components throughout generation.
As shown in Table~\ref{tab:spectral_schedule}, both progressive schedules outperform the fixed baseline, demonstrating the necessity of gradually activating high-frequency components to refine visual thoughts.
Between progressive variants, the cosine scheduler achieves comparable or slightly better accuracy than linear scheduling while substantially reducing inference latency (up to $35\%$ on Maze and $42\%$ on FrozenLake), which delays computationally expensive high-frequency activation. 
Although the fixed schedule yields the lowest latency, its degraded accuracy highlights the balance between efficiency and reasoning fidelity, validating our scheduling as a favorable balance between performance and efficiency.

\paragraph{Impact of Guidance Scale.}
Figure~\ref{fig:cfg} shows that reasoning performance is robust across a wide range of classifier-free guidance scales.
Accuracy consistently improves as the guidance scale increases from small values, peaking around $w=4$, where textual conditioning is sufficiently strong to steer visual thought generation.
When $w$ is too small, conditioning is insufficient and leads to under-guided reasoning, while further increasing $w$ beyond this point yields no additional gains and can even degrade performance due to overly deterministic generation.

\paragraph{Impact of Inference Steps.}
Figure~\ref{fig:steps} analyzes the effect of the number of ODE inference steps $T$ on reasoning performance.
Across both Maze and FrozenLake environments, accuracy improves as $T$ increases from very small values, indicating that insufficient integration steps lead to under-refined visual thoughts.
However, performance quickly saturates around $T=15$–$20$, beyond which additional steps provide only marginal gains while incurring noticeably higher latency.
These results suggest that a moderate number of inference steps is sufficient to balance reasoning accuracy and computational efficiency.

\begin{figure}
    \centering
    \includegraphics[width=\linewidth]{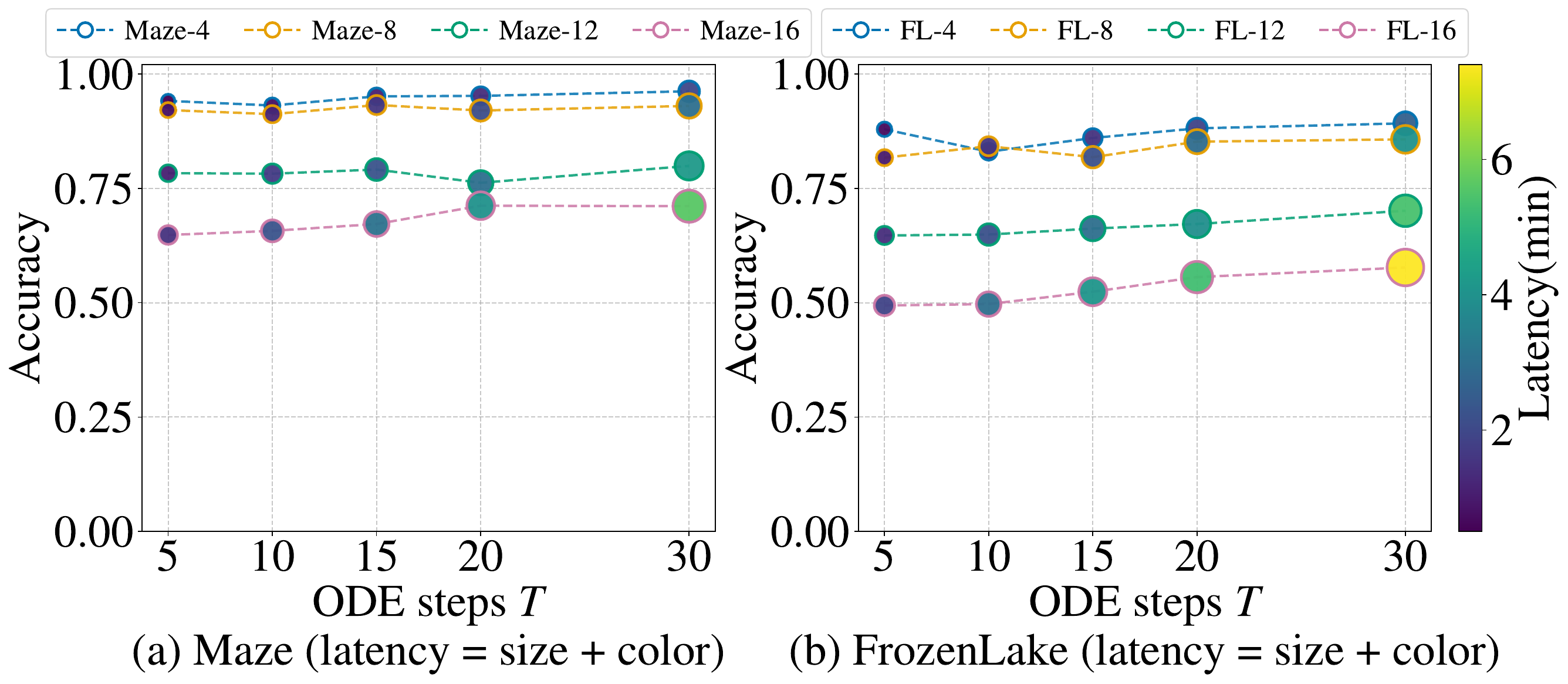}
    \caption{Effect of the number of ODE inference steps $T$ on reasoning accuracy.
Marker size and color encode inference latency.
Accuracy improves with increasing $T$ and saturates at moderate step counts, while larger $T$ leads to diminishing returns and higher computational cost.}
    \label{fig:steps}
\end{figure}


\begin{table}[t]
\centering
\small
\setlength{\tabcolsep}{6pt}
\renewcommand{\arraystretch}{1.1}
\caption{\textbf{Ablation on spectral compression vs.\ learned bottleneck.}
We compare (i) \emph{Spectral only}, which applies spectral compression directly to the intermediate spatial layout without a learned latent bottleneck; (ii) \emph{VAE only}, which performs full VAE encoding/decoding at every hop without spectral masking; and (iii) \emph{Spectral+VAE}, which performs spectral-progressive activation in the VAE latent space. All settings use the same flow solver budget of 5 inference steps per hop.}


\label{tab:ablate_two_components}
\resizebox{0.48\textwidth}{!}{
\begin{tabular}{cc|cc|cc}
\toprule
\multirow{2}{*}{\textbf{Spectral}} & \multirow{2}{*}{\textbf{Latent VAE}}
& \multicolumn{2}{c|}{\textbf{FrozenLake}}
& \multicolumn{2}{c}{\textbf{EmbSpatial}} \\
\cmidrule(lr){3-4} \cmidrule(lr){5-6}
& & Acc.$^\uparrow$ & Lat.(min)$^\downarrow$
  & Acc.$^\uparrow$ & Lat.(min)$^\downarrow$ \\
\midrule
\textcolor{green!70!black}{\cmark} & \textcolor{red!77!black}{\xmark}
& 81.47 & \nd1.97 & 64.37 & \nd3.43 \\
\textcolor{red!77!black}{\xmark} & \textcolor{green!70!black}{\cmark}
& \nd87.15 & 6.33 & \nd66.02 & 10.31 \\
\textcolor{green!70!black}{\cmark} & \textcolor{green!70!black}{\cmark}
& \fs{87.94} & \fs{1.29} & \fs{67.79} & \fs{2.35} \\
\bottomrule
\end{tabular}
}
\end{table}

\paragraph{Compound Effect of Spectral Projection \& Latent VAE.}
Table~\ref{tab:ablate_two_components} reveals a clear \emph{compound} benefit when explicit spectral scheduling is combined with a learned latent bottleneck under the same 5-step solver budget. Using \emph{Spectral only} is computationally efficient (1.97/3.43 min) but noticeably less accurate (81.47/64.37), indicating that direct frequency compression without a learned manifold can discard task-critical semantics. Conversely, \emph{VAE only} achieves strong accuracy (87.15/66.02) yet substantially slower (6.33/10.31 min), reflecting the high per-hop overhead on the full latent representation. 
Strikingly, \emph{Spectral+VAE} attains the best accuracy while reducing latency to 1.29/2.35 min, about \(4.9\times\) faster on FrozenLake and \(4.4\times\) faster on EmbSpatial than VAE-only, showing that spectral-progressive activation \emph{amplifies} the strengths of the VAE by making latent updates compute-adaptive: early hops update low-frequency latent structure where the flow is smoother and cheaper to integrate, while later hops selectively unlock higher-frequency components only when fine detail is needed.

\section{Conclusion}
We propose \textit{SpecFlow}, a \emph{new} lightweight framework that replaces token-accumulating pixel-space visual thoughts with a fixed-size, frequency-limited visual workspace in the discrete cosine domain.
By exploiting the energy compaction property of cosine representations, \textit{SpecFlow} preserves global layout and relational structure through low-frequency components, while introducing higher-frequency details only when increased spatial precision is required.
The visual workspace is updated via deterministic cosine-space flow matching, which evolves only the active spectral coefficients conditioned on the current textual thought and visual state.
With classifier-free guidance, visual updates remain aligned with textual reasoning, allowing earlier visual thoughts to be safely discarded without loss of essential spatial cues.
As a result, \textit{SpecFlow} decouples memory usage and computation from reasoning depth, enabling long-horizon multimodal spatial reasoning with stable latency, bounded memory, and effective spatial grounding, while reducing computation and KV cache costs by up to $2.1\times$.

\section*{Acknowledgment}
This work was funded under the COIN-3D project, which has received funding from the European Union’s Horizon Europe research and innovation program under grant agreement No. 101159667.

\section*{Impact Statement}
This paper introduces a new mechanism for externalizing intermediate visual reasoning without saturating the multimodal context.
Rather than denoising in pixel space or a learned latent space, SpecFlow evolves intermediate visual thoughts directly in the discrete cosine domain, maintaining a fixed-size spectral workspace throughout multi-hop inference.
By coupling the generation process with frequency, early updates emphasize low-frequency coefficients to capture global layout and relational structure, while higher frequencies are activated only when additional spatial precision is required.
In contrast to VAE-based latent diffusion, the cosine projection is a fixed and invertible transform without a learned bottleneck, enabling transparent, frequency-aware control over where computation is spent.
As a result, SpecFlow reduces the cost of multi-hop multimodal spatial reasoning by preventing visual token accumulation and enabling deterministic updates with a small number of steps, improving practicality under tight memory and latency budgets.
More broadly, this frequency-domain perspective highlights a controllable way to trade visual precision for compute while retaining interpretable spatial structure.

Looking forward, the same spectral, coarse-to-fine principle may extend beyond intermediate thoughts to efficient photorealistic diffusion generation.
Progressively unlocking higher-frequency components provides a natural curriculum: establish global structure first, then synthesize fine-grained appearance.
Such frequency-aware curricula could offer explicit control over realism--efficiency trade-offs and motivate future generative models that combine reasoning-efficient intermediate representations with high-fidelity visual synthesis.


\nocite{langley00}

\bibliography{icml_cosine}
\bibliographystyle{icml2026}

\newpage
\appendix
\onecolumn
\appendix

\section{Dataset Details}
\label{sec:appendix-datasets}

\subsection{Spatial Reasoning Benchmarks.}

To probe spatial grounding and compositional vision--language understanding, we consider four established benchmarks that emphasize complementary aspects of spatial reasoning (as shown in Table~\ref{tab:spatial-reasoning}): (i) binary verification of visually grounded claims, (ii) fine-grained relational grounding in complex layouts, (iii) compositional image--caption alignment, and (iv) target-centric visual search in cluttered scenes. Unless otherwise noted, we follow each benchmark's official evaluation split and keep the original visual--textual inputs to avoid introducing confounding prompt effects.

\textbf{VSR}~\citep{liu2023visual} provides image--text pairs where the text states a factual claim about the image; the task is to determine whether the claim is supported by visual evidence. We evaluate on the unseen test split. 
Following prior practice~\citep{li-etal-2025-vocot}, we wrap each claim into a question-style prompt: \emph{``Is there an event \{description\} in the image?''} to encourage explicit visual grounding.

\textbf{EmbSpatial}~\citep{du2024embspatial} focuses on embodied spatial understanding with visually complex scenes and fine-grained spatial expressions (e.g., left/right relations between entities). We use the official test split and keep the original paired visual--textual inputs unchanged to faithfully measure grounding precision.

\textbf{V-Star}~\citep{wu2024v} evaluates visual search in cluttered environments, where the model must locate a target concept given a descriptive query. 
We follow the standard protocol and use the benchmark as-is, without introducing additional prompting.

\textbf{Winoground}~\citep{thrush2022winoground} is designed to test visio-linguistic compositionality through challenging image--caption matching: each sample contains two images and two subtly different captions. 
We formulate it as a caption selection problem, using the prompt \emph{``Please describe the image.''} to induce the model to select the caption that best matches each image.

\begin{table}[H]
\centering
\small
\setlength{\tabcolsep}{5pt}
\renewcommand{\arraystretch}{1.15}
\caption{Spatial reasoning benchmarks used in our evaluation. We only specify additional prompting when it is explicitly applied; otherwise we keep the benchmark inputs unchanged.}
\begin{tabular}{l|l|l|r|l}
\toprule
\textbf{Benchmark} & \textbf{Primary capability} & \textbf{Eval split} & \textbf{Size} & \textbf{Input / prompting} \\
\midrule
VSR~\citep{liu2023visual} &
Claim verification &
unseen test &
1222 &
``Is there an event \{description\} in the image?'' \\ 
EmbSpatial~\citep{du2024embspatial} &
Relational grounding &
test &
3625 &
No extra prompting; preserve original inputs \\
Winoground~\citep{thrush2022winoground} &
Compositional alignment &
test &
800 &
Caption sel.: “Please describe the image. \\ 
V-Star~\citep{wu2024v} &
Visual search &
-- &
238 &
No extra prompting; use benchmark as-is \\
\bottomrule
\end{tabular}
\label{tab:spatial-reasoning}
\end{table}

\subsection{Dynamic Spatial Reasoning Tasks.}

We evaluate \textit{SpecFlow} on three  dynamic spatial reasoning benchmarks, Maze~\citep{ivanitskiy2023configurable}, MiniBehavior~\citep{jin2023mini}, and FrozenLake~\citep{wu2024vsp}, all of which require multi-step reasoning over simulated grid environments. These tasks stress (i) tracking time-varying spatial configurations, (ii) grounding action trajectories in the underlying layout, and (iii) inferring implicit goals under low-level visual dynamics.


Table~\ref{tab:dataset-stats} summarizes the core dataset characteristics used in our experiments, including grid size ranges, action-space properties, and dataset scale.

\textbf{Maze}~\citep{ivanitskiy2023configurable} is built with the Maze-Dataset framework, which generates 2D grid Mazes via an iterative depth-first search procedure. Mazes of sizes 3 to 16 are generated using multiple random seeds to diversify layout complexity. For each instance, a navigation path is constructed, and redundant or repeated paths are filtered out to reduce knowledge leakage between training and test splits. At evaluation time, the input consists of a Maze configuration together with three destination candidates (coordinate points), and the model must select the correct goal cell.

\textbf{MiniBehavior}~\citep{jin2023mini} is derived from the INSTALLINGAPRINTER simulation suite. Reinforcement-learning agents are trained to complete procedural tasks in 7$\times$7,  8$\times$8,  12$\times$12, 16$\times$16, 20$\times$20, grid environments using the Stable-Baselines3 library. The dataset includes diverse agent trajectories, and only successful action sequences (i.e., those that complete the simulated printer installation task) are retained. To discourage memorization, the dataset applies controlled environment perturbations: for repeated action paths or previously encountered environments, there is a 40\% probability of perturbing either the printer or table coordinates and a 20\% probability of removing one of these objects. After perturbation, the original action sequence is replayed in the modified environment to confirm validity.

\textbf{FrozenLake}~\citep{wu2024vsp} is adapted from OpenAI Gym's FrozenLake environment. It is a grid navigation setting in which an agent must reach a goal while avoiding holes, with action selection guided by Q-table-based policies. 
Trajectories are generated from agents acting greedily with respect to Q-values. 
Successful action sequences are included only if they have not appeared before in the same environment. 
For unsuccessful attempts (e.g., falling into a hole), trajectories are included with 50\% probability either in their original form or with appended random actions. Ambiguous trajectories (neither clearly successful nor failed) are retained to increase coverage. 
Additionally, Q-tables are re-learned with randomly perturbed reward paths to introduce variance and mitigate overfitting.

\begin{table}[!t]
\centering
\small
\setlength{\tabcolsep}{6pt}
\renewcommand{\arraystretch}{1.15}
\caption{Characteristics of dynamic spatial reasoning tasks, highlighting varying complexities in action dynamics and structural patterns.}
\begin{tabular}{l|c|c|c}
\toprule
\textbf{Characteristic} & \textbf{Maze} & \textbf{MiniBehavior} & \textbf{FrozenLake} \\
\midrule
Grid Sizes       & 3,4,5,6, 8,12,16   & 7, 8,12,16,20   & 3,4,5,6,8,12,16  \\
Entity Types     & 5      & 3      & 3      \\
Entities Numbers & 5      & 3      & 7.16   \\
Action Length    & 14.33   & 17.92   & 15.91   \\
Action Types     & 4      & 7      & 4      \\
Pattern Details  & \xmark & \xmark & \cmark \\
Train Set Size   & 10000   & 10000   & 10000   \\
Test Set Size    & 2500   & 2500   & 2500   \\
\bottomrule
\end{tabular}
\label{tab:dataset-stats}
\end{table}

We follow the same experimental conditions as MVoT~\citep{li2025imagine}. 
The three benchmarks above provide a complementary testbed for multi-step spatial-temporal reasoning, spanning procedurally generated layouts (Maze), procedurally structured tasks with controlled perturbations (MiniBehavior), and navigation under sparse-reward dynamics with pattern structure (FrozenLake, cf. Table~\ref{tab:dataset-stats}).

\section{Implementation Details}
\label{app:implementation}

This appendix details the experimental implementation of \textit{SpecFlow} and the matched autoregressive (AR) baselines.
We describe: (i) the flow-matching (FM) training objective and the deterministic ODE solver used for diffusion-based visual-thought generation, (ii) supervised fine-tuning (SFT) and GRPO settings for \texttt{Qwen3-VL} baselines, and (iii) the reward definitions and output-format constraints adopted to enable reliable automatic evaluation.

\subsection{Backbones and Training Protocol}
\label{app:backbones}

\textbf{\textit{SpecFlow} backbones.}
\textit{SpecFlow} combines a diffusion-based visual-thought generator with an AR text pathway.
For visual thoughts, we initialize from \texttt{Qwen-Image-Edit-2509} and fine-tune a \textbf{20B} Multimodal Diffusion Transformer (MMDiT) that operates in the latent space of a VAE.
For textual reasoning and action emission, we use \texttt{Qwen3-VL-8B} as the default AR backbone.
To control for model scaling in AR comparisons, we additionally report results with a stronger baseline, \texttt{Qwen3-VL-32B}, under the same training and evaluation protocol.

\textbf{Task-specific training and data fairness.}
We train an FM model for each task category to avoid cross-task interference and to follow standard practice in task-specialized diffusion fine-tuning.
All datasets are duplicated prior to training.
For fairness, \textit{SpecFlow} and AR baselines are trained on identical data distributions for each task, and all reported comparisons use the same train/validation/test splits.

\textbf{Autoregressive baselines.}
AR baselines are instantiated with \texttt{Qwen3-VL-8B} and \texttt{Qwen3-VL-32B}.
They are trained using SFT and, when applicable, further optimized with GRPO on the same task data used for the diffusion models.
SFT prompts are designed to elicit concise answers (without explicit chain-of-thought outputs) to match our evaluation protocol.
GRPO uses task-specific partial rewards to reduce the sparsity of binary exact-match supervision in long-horizon planning.

\subsection{Flow Matching for Generative Visual Thoughts}
\label{app:flow_matching}

We employ flow matching to learn a continuous-time velocity field that deterministically transports a Gaussian noise latent to a solution-image latent under multimodal conditioning.
The velocity field is parameterized by a Multimodal Diffusion Transformer and conditioned on both visual observations and textual instructions.
At inference time, generation is performed by solving the resulting ordinary differential equation with a fixed-step solver, yielding deterministic visual-thought trajectories with predictable computational cost.
This formulation avoids stochastic sampling and enables stable, low-variance generation suitable for multi-step reasoning.
Beyond the base flow-matching formulation, this framework is also compatible with lightweight adaptation strategies, including projection-based parameter-efficient tuning~\cite{shen2025macp}, dynamic compression and routing mechanisms~\cite{bi2025adadcp}, spectrum-domain adaptation~\cite{shen2025ssh}, and gradient-guided optimization or refinement schemes~\cite{kanoulas2025gradient,huang2025gradient}.

\subsection{Training Hyperparameters}
\label{app:hparams}

Table~\ref{tab:hparams_main} summarizes the primary hyperparameters used for flow-matching fine-tuning, autoregressive supervised fine-tuning (SFT), and GRPO.
For FM and SFT, we adopt identical learning rates and LoRA ranks to ensure comparable adaptation capacity across generative paradigms.
GRPO is performed with a substantially smaller learning rate to stabilize policy optimization and prevent catastrophic drift from the supervised initialization.
Batch sizes are selected to maximize hardware utilization under memory constraints, with larger effective batches used for GRPO to reduce variance across rollouts.
The KL regularization coefficient follows standard practice in preference-based optimization to balance exploration and policy stability.

\begin{table}[t]
\centering
\small
\setlength{\tabcolsep}{6pt}
\renewcommand{\arraystretch}{1.15}
\caption{Hyperparameter settings for different training paradigms.}
\begin{tabular}{lccc}
\toprule
 & \textbf{Flow Matching} & \textbf{SFT} & \textbf{GRPO} \\
\midrule
Framework & DiffSynth-Studio~\citep{zhang2025eligen} & SWIFT~\citep{zhao2025swift} & verl~\citep{sheng2025hybridflow} \\
Epochs & 5 & 5 & 1 \\
Learning rate & $1\times 10^{-4}$ & $1\times 10^{-4}$ & $1\times 10^{-6}$ \\
LoRA rank & 32 & 32 & -- \\
Batch size & 4 & 16 & 64 (8B) / 32 (32B) \\
Rollout size ($n$) & -- & -- & 4 \\
KL coefficient & -- & -- & $1\times 10^{-2}$ \\
\bottomrule
\end{tabular}
\label{tab:hparams_main}
\end{table}

\subsection{Instruction Tuning Configuration}
\label{app:inst_tune}

Table~\ref{tab:instruction-config} reports the instruction-tuning configuration used to align the multimodal backbone with task-specific instructions.
We initialize the visual and multimodal components from pretrained \texttt{Qwen-Image-Edit-2509} and \texttt{Qwen3-VL-8B} to preserve strong cross-modal representations while minimizing additional training cost.
Instruction tuning is performed for a single epoch with a cosine learning-rate schedule and no warm-up, as the model is already well aligned and further training yields diminishing returns.
We adopt conservative optimization settings (low peak learning rate, gradient clipping, and moderate weight decay) to stabilize fine-tuning and avoid overfitting.
The input resolution and sequence length are chosen to match the maximum context required by downstream reasoning tasks, and all training is conducted in bfloat16 precision on NVIDIA H100 GPUs to balance numerical stability and efficiency.

\begin{table*}[!t]
\centering
\small
\setlength{\tabcolsep}{12pt}
\renewcommand{\arraystretch}{1.2}
\caption{\textit{SpecFlow} instruction-tuning configuration.}
\begin{tabular}{l|c}
\toprule
\textbf{Configuration} & \textbf{Instruction Tuning} \\
\midrule
Visual Encoder & OpenAI-CLIP ViT-L/14~\citep{radford2021learning} \\
Backbone initialization &
\multirow{2}{*}{%
    \begin{tabular}[c]{@{}l@{}}
    Qwen-Image-Edit-2509~\citep{esser2024scaling,wu2025qwen} \\
    ~~~~~~~~~~~~~~~~ \& Qwen3-VL-8B~\citep{bai2023qwen}
    \end{tabular}
} \\
 & \\ 
Optimizer & AdamW \\
Optimizer hyperparameters & $\beta_1 = 0.9,\ \beta_2 = 0.95,\ \epsilon = 10^{-4}$ \\
Global batch size & 64 \\
Peak learning rate (LLM) & $10^{-5}$ \\
Learning rate schedule & Cosine \\
Training epochs & 1 \\
Warm-up ratio & 0 \\
Weight decay & $3 \times 10^{-2}$ \\
Gradient clipping & 1.0 \\
Input image resolution & $336 \times 336$ \\
Max input length to LLM & 3072 \\
Numerical precision & bfloat16 \\
GPU usage & 4 $\times$ (4 NVIDIA H100s) \\
Training time & 74.9h \\
\bottomrule
\end{tabular}
\label{tab:instruction-config}
\end{table*}

\subsection{Prompting and Output Constraints for AR Baselines}
\label{app:prompts}

To enable reliable automatic evaluation, all autoregressive baselines are constrained to emit a \emph{single} final answer in a fixed, machine-parseable format, terminated by an explicit end-of-response marker.
Intermediate reasoning traces, free-form explanations, or unconstrained continuations are disallowed.
This design eliminates ambiguity in answer extraction, enforces consistent stopping behavior across models, and reduces evaluation variance arising from heterogeneous generation lengths.
By standardizing output formats, we ensure that performance differences reflect reasoning capability rather than prompt sensitivity or decoding artifacts.

\subsection{Reward Functions for GRPO}
\label{app:rewards}

Exact-match rewards are often too sparse for long-horizon planning, since a single early mistake can invalidate the entire trajectory.
To provide denser learning signals, we define task-specific \emph{partial} rewards that measure progress toward the correct solution.

\paragraph{Sequential planning (Maze, MiniBehavior, FrozenLake).}
Let $P=(p_1,\dots,p_m)$ denote the predicted action sequence and $G=(g_1,\dots,g_n)$ the ground-truth sequence.
We reward the length of the longest correct prefix in $P$ relative to $G$:
\begin{equation}
R_{\mathrm{plan}}
=
\frac{1}{n}\,
\max\Big\{k \;\Big|\; k \le \min(m,n)\ \wedge\ \forall i \le k,\ p_i = g_i \Big\}.
\label{eq:r_plan}
\end{equation}
This reward equals $1$ only when the full ground-truth action sequence is matched, and decreases smoothly when errors occur earlier in the trajectory.

\subsection{Implementation of Baselines}
\label{sec:impl-baselines}

We adopt three recent efficiency–oriented reasoning baselines, \emph{SoT}, \emph{LightFast}, and \emph{Heima}, and align their settings with our \emph{SpecFlow} evaluation pipeline.


\textbf{SoT (Sketch‐of‐Thought)}~\citep{aytes2025sketch}
is a prompt-based approach that encourages models to express intermediate reasoning as compact “sketch” phrases, thereby reducing the number of generated tokens without any parameter updates. 
In our implementation, we inject the task-specific SoT directive (``\texttt{<sketch>}") before the original \textsc{Qwen} prompt. 
We keep the setting unchanged (temperature $0.7$). 
Since SoT operates purely through prompting, it introduces no extra training stage and does not alter the underlying model.

\textbf{LightFast (LightThinker + FastV)}~\citep{zhang2025lightthinker,chen2024image}
couples text-side and vision-side pruning in a modular manner: \emph{LightThinker} reduces language-side overhead by removing low-utility reasoning tokens, while \emph{FastV} trims visual tokens by discarding redundant patches after the second transformer layer. 
We implement LightThinker as a lightweight controller on top of \textsc{Qwen}'s language blocks and adopt the original hyperparameters, using $\lambda_{\text{entropy}}{=}5\text{e}^{-3}$. 
FastV is inserted into the frozen CLIP encoder using its default layer-2 cutoff rule. 
To ensure a fair comparison under similar adaptation budgets, we jointly fine-tune the two modules for one epoch on the VoCoT corpus.

\textbf{Heima}~\citep{shen2025efficient} 
performs reasoning in a compressed latent ``hidden thinking'' representation by mapping intermediate thoughts into a single latent token that is only decoded at the end. 
To adapt it to multimodal inputs, we feed interleaved image--text features to the Heima encoder so that the latent vector can capture fused cross-modal cues. 
Concretely, each intermediate step is represented as a 128-dimensional latent, while the decoder remains frozen and is used only for reconstructing the final output when required. 
We follow the official setup and fine-tune the encoder only, using AdamW ($\beta_1{=}0.9,\ \beta_2{=}0.95$, learning rate $1{\times}10^{-4}$) for 3 epochs on the VoCoT corpus. 

\textbf{CODI}~\citep{shen2025codi} 
distills explicit natural-language chain-of-thought reasoning into a continuous latent space via teacher--student learning.
An explicit-CoT teacher model is first trained with supervised reasoning traces, while a student model learns to perform implicit reasoning by aligning its hidden states with those of a designated teacher token.
This alignment encourages the student to internalize reasoning steps without generating intermediate text.
We adapt CODI to multimodal spatial reasoning by conditioning both teacher and student on interleaved image--text inputs, enabling the latent representations to capture fused cross-modal semantics.
Following the official setup, we train the student model using hidden-state distillation with a frozen teacher.

\textbf{PCCoT}~\citep{wu2025parallel} 
accelerates continuous chain-of-thought reasoning by updating latent thought tokens in parallel rather than sequentially, inspired by Jacobi-style iterative refinement.
Instead of autoregressively generating reasoning steps, PCCoT maintains a fixed set of latent reasoning tokens that are iteratively refined across multiple parallel iterations, improving both training and inference efficiency.
We extend PCCoT to the multimodal setting by initializing latent tokens from interleaved image and text features, allowing the parallel updates to operate over fused multimodal representations.
We follow the official implementation, using the same iteration schedule and optimization settings.

\textbf{SparseVLM}~\citep{zhang2024sparsevlm} reduces vision-language compute by selecting a sparse subset of image tokens before they enter the language backbone. 
We integrate SparseVLM into the \textsc{Qwen} stack by applying its learned token selector on CLIP-derived visual features, using the target sparsity level $\rho$. 
The selector ranks patches by predicted importance and filters out low-saliency/background tokens while keeping object-relevant cues. Following the official procedure, we fine-tune the pruning module together with \textsc{Qwen} for one epoch on the VoCoT corpus, using AdamW with $\beta_1{=}0.9,\ \beta_2{=}0.95$ and learning rate $1{\times}10^{-4}$. 

\textbf{DiffThinker}~\citep{he2025diffthinker} is a generative multimodal spatial reasoning framework that reformulates vision-centric reasoning as a native image-to-image generation process using diffusion models, rather than autoregressive token decoding. 
Following the official setup, DiffThinker is built upon \textsc{Qwen-Image-Edit} and implemented using a multimodal diffusion transformer (MMDiT) trained with flow matching in the latent space of a VAE. 
Given a visual input and textual instruction, the model directly generates a solution image that encodes the complete reasoning trajectory in visual space, which is subsequently parsed into symbolic outputs for evaluation. 
Training is performed task-specifically using supervised flow matching, where the velocity field is learned via mean squared error between predicted and target flows under identical data splits as the \textsc{Qwen} baselines. 
At inference time, reasoning is executed by solving a fixed-step ordinary differential equation using an Euler solver, yielding deterministic inference costs independent of reasoning depth. 

\section{Additional Analysis}

\subsection{Why Low-Frequency Spectral Restriction Lowers the Lipschitz Constant}
\label{app:lipschitz_spectral}

We consider flow-matching dynamics parameterized by a velocity field
\begin{equation}
\frac{d x(t)}{d t} \;=\; v_\theta(x(t), t, h),
\label{eq:ode_base_app}
\end{equation}
and study Lipschitz smoothness w.r.t.\ the state $x$ (fixing $(t,h)$). We show that restricting the state evolution to a low-frequency cosine subspace yields a vector field whose Lipschitz constant cannot increase, and typically decreases strictly when the largest Jacobian gain involves high-frequency directions.

\paragraph{Spectral restriction.}
Let $C \in \mathbb{R}^{d\times d}$ be an orthogonal block cosine projection matrix:
\begin{equation}
C^\top C \;=\; I,
\label{eq:dct_orth_app}
\end{equation}
and let $P \in \mathbb{R}^{m\times d}$ select the $m$ low-frequency cosine coefficients. Define the low-frequency coordinate $z \in \mathbb{R}^m$ and the corresponding embedded state $x \in \mathbb{R}^d$ by
\begin{equation}
z \;=\; P C x,
\qquad
x \;=\; C^\top P^\top z,
\label{eq:z_x_map_app}
\end{equation}
and denote the projector onto the active (low-frequency) subspace in cosine coordinates by
\begin{equation}
\Pi \;=\; P^\top P.
\label{eq:Pi_def_app}
\end{equation}
We define the restricted vector field in the active coordinates as
\begin{equation}
g(z,t,h) \;=\; P C \, v_\theta(C^\top P^\top z,\, t,\, h),
\label{eq:g_def_app}
\end{equation}
so the restricted dynamics is $d z(t)/dt = g(z(t),t,h)$.

\begin{lemma}[Projection cannot increase Lipschitzness]
\label{lem:spec-nonexpansive-stream}
Assume that for each fixed $(t,h)$, $v_\theta(\cdot,t,h)$ is $L$-Lipschitz:
\begin{equation}
\bigl\| v_\theta(x,t,h) - v_\theta(x',t,h) \bigr\|_2
\;\le\;
L \, \|x-x'\|_2,
\qquad \forall x,x'\in\mathbb{R}^d.
\label{eq:v_lip_app}
\end{equation}
Then $g(\cdot,t,h)$ defined in~\eqref{eq:g_def_app} is $L$-Lipschitz in $z$, and hence
\begin{equation}
L_{\mathrm{spec}} \;\le\; L.
\label{eq:Lspec_le_L_app}
\end{equation}
\end{lemma}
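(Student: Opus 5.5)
The plan is to bound the difference $g(z,t,h)-g(z',t,h)$ directly by composing three Lipschitz maps and showing that the two linear ones are non-expansive. The only facts needed are the orthogonality \eqref{eq:dct_orth_app} and the fact that $P$ is a coordinate selector.

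\textbf{Step 1: the linear maps are non-expansive.} Since $P\in\mathbb{R}^{m\times d}$ selects $m$ of the $d$ cosine coordinates, its rows are distinct standard basis vectors. Hence $PP^\top = I_m$, and $\|P\|_2=\|P^\top\|_2=1$. From $C^\top C=I$ with $C$ square, we get $CC^\top=I$, so $\|C\|_2=\|C^\top\|_2=1$. It follows that $\|PCy\|_2\le\|y\|_2$ for every $y\in\mathbb{R}^d$. The embedding $E:=C^\top P^\top$ is in fact an isometry, because
\begin{equation*}
E^\top E = P C C^\top P^\top = P P^\top = I_m ,
\end{equation*}
so $\|E(z-z')\|_2=\|z-z'\|_2$.

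\textbf{Step 2: chain the bounds.} Fix $(t,h)$ and take any $z,z'\in\mathbb{R}^m$. Applying Step 1, then the hypothesis \eqref{eq:v_lip_app}, then the isometry property of $E$ gives
\begin{equation*}
\|g(z)-g(z')\|_2=\|PC\,(v_\theta(Ez)-v_\theta(Ez'))\|_2\le \|v_\theta(Ez)-v_\theta(Ez')\|_2\le L\|Ez-Ez'\|_2 = L\|z-z'\|_2 .
\end{equation*}
So $g(\cdot,t,h)$ is $L$-Lipschitz. Taking $L_{\mathrm{spec}}$ to be the best Lipschitz constant of $g$ (the infimum of admissible constants), we obtain $L_{\mathrm{spec}}\le L$, which is \eqref{eq:Lspec_le_L_app}.

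\textbf{Where care is needed.} There is no genuine obstacle; the only point requiring attention is Step 1. We must deduce $CC^\top=I$ from $C^\top C=I$, which holds because $C$ is square. We must also confirm that $P$ is a true coordinate selector, so that $PP^\top=I_m$; this is what makes the composed map $PC$ a contraction and $E$ an isometry.

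If a sharper statement is wanted, the same chain can be written at the Jacobian level:
\begin{equation*}
J_g = PC\,J_v\,C^\top P^\top = P\,(C J_v C^\top)\,P^\top ,
\end{equation*}
which is a compression of $CJ_vC^\top$ onto the retained coordinates, so $\|J_g\|_2\le\|J_v\|_2$. Strict inequality holds when the top singular directions of $CJ_vC^\top$ have components outside the range of $\Pi$. This is the ``typically decreases strictly'' remark, but it is not needed for the lemma.
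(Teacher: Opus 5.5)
Your proof is correct and matches the paper's argument: bound $\|PC\|_2\le 1$, apply the Lipschitz hypothesis, then use that $C^\top P^\top$ is an isometry, so $\|x-x'\|_2=\|z-z'\|_2$. Your explicit check $E^\top E = PCC^\top P^\top = I_m$ (using that $C$ is square) justifies the isometry step a little more carefully than the paper does, but the route is the same.
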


\begin{proof}
Fix $(t,h)$. For any $z,z'\in\mathbb{R}^m$, set $x=C^\top P^\top z$ and $x'=C^\top P^\top z'$. Using~\eqref{eq:g_def_app} and operator norms,
\begin{equation}
\|g(z,t,h)-g(z',t,h)\|_2
\;\le\;
\|PC\|_2 \,\|v_\theta(x,t,h)-v_\theta(x',t,h)\|_2.
\label{eq:lip_step1_app}
\end{equation}
Since $C$ is orthogonal and $P$ is a coordinate selection, $\|PC\|_2\le 1$. Applying~\eqref{eq:v_lip_app} gives
\begin{equation}
\|g(z,t,h)-g(z',t,h)\|_2
\;\le\;
L\,\|x-x'\|_2.
\label{eq:lip_step2_app}
\end{equation}
Finally, $x-x'=C^\top P^\top (z-z')$ and the embedding $P^\top$ is an isometry from $\mathbb{R}^m$ to the selected subspace, so
\begin{equation}
\|x-x'\|_2
=
\|C^\top P^\top (z-z')\|_2
=
\|P^\top (z-z')\|_2
=
\|z-z'\|_2.
\label{eq:lip_step3_app}
\end{equation}
Combining~\eqref{eq:lip_step2_app}--\eqref{eq:lip_step3_app} yields $\|g(z,t,h)-g(z',t,h)\|_2 \le L\|z-z'\|_2$, proving the claim.
\end{proof}

\begin{theorem}[Jacobian block controls the restricted Lipschitz constant]
\label{thm:spec-strict-reduction-stream}
Assume $v_\theta(\cdot,t,h)$ is differentiable in $x$ and let
\begin{equation}
J(x,t,h) \;=\; \nabla_x v_\theta(x,t,h),
\label{eq:J_def_app}
\end{equation}
with cosine-basis Jacobian
\begin{equation}
\tilde{J}(x,t,h) \;=\; C\,J(x,t,h)\,C^\top.
\label{eq:Jtilde_def_app}
\end{equation}
Then the Jacobian of $g$ satisfies
\begin{equation}
\nabla_z g(z,t,h)
\;=\;
P\,\tilde{J}(C^\top P^\top z,t,h)\,P^\top,
\label{eq:gradg_app}
\end{equation}
and therefore
\begin{equation}
L_{\mathrm{spec}}(t,h)
\;\triangleq\;
\sup_{z}\bigl\|\nabla_z g(z,t,h)\bigr\|_2
\;=\;
\sup_{x}\bigl\|P\,\tilde{J}(x,t,h)\,P^\top\bigr\|_2
\;\le\;
\sup_{x}\bigl\|\tilde{J}(x,t,h)\bigr\|_2.
\label{eq:Lspec_block_app}
\end{equation}
In particular, $L_{\mathrm{spec}}(t,h)\le L(t,h)$ where $L(t,h)\triangleq \sup_x \|J(x,t,h)\|_2$. Moreover, if
\begin{equation}
\sup_{x}\bigl\|P\,\tilde{J}(x,t,h)\,P^\top\bigr\|_2
\;<\;
\sup_{x}\bigl\|\tilde{J}(x,t,h)\bigr\|_2,
\label{eq:strict_gap_app}
\end{equation}
then $L_{\mathrm{spec}}(t,h) < L(t,h)$ (a strict reduction).
\end{theorem}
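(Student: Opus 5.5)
The plan is a direct calculation: the chain rule gives \eqref{eq:gradg_app}, and two standard operator-norm facts then give \eqref{eq:Lspec_block_app}. The strict-reduction clause then follows from the same identity.

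\textbf{Step 1 (Jacobian formula).} Fix $(t,h)$. The map $z\mapsto g(z,t,h)$ in \eqref{eq:g_def_app} is the composition of the linear embedding $z\mapsto C^\top P^\top z$, then $v_\theta(\cdot,t,h)$, then the linear map $PC$. By the chain rule,
\[
\nabla_z g(z,t,h)=PC\,J(C^\top P^\top z,t,h)\,C^\top P^\top .
\]
Inserting the definition \eqref{eq:Jtilde_def_app} gives \eqref{eq:gradg_app}.

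\textbf{Step 2 (supremum identity and bound).} For differentiable maps, the best Lipschitz constant equals the supremum of the Jacobian operator norm. I will take this as the definition of $L_{\mathrm{spec}}(t,h)$, as the statement does.

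\begin{itemize}
\item The sup over $z\in\mathbb{R}^m$ equals the sup over $x$ in the range of $C^\top P^\top$, i.e. the low-frequency subspace. Under the paper's notation I read ``$\sup_x$'' in \eqref{eq:Lspec_block_app} as ranging over that embedded set; I will state this explicitly.
\item Since the subspace is contained in $\mathbb{R}^d$, this sup is at most the sup over all $x$.
\item Submultiplicativity gives $\|P\tilde J P^\top\|_2\le\|P\|_2\|\tilde J\|_2\|P^\top\|_2\le\|\tilde J\|_2$, using $\|P\|_2=1$ because $P$ is a row selection.
\item Orthogonality of $C$ from \eqref{eq:dct_orth_app} gives $\|\tilde J\|_2=\|CJC^\top\|_2=\|J\|_2$.
\end{itemize}

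Together these yield $L_{\mathrm{spec}}\le\sup_x\|\tilde J\|_2=L(t,h)$. This is consistent with Lemma~\ref{lem:spec-nonexpansive-stream}, which already gives the non-strict bound without differentiability.

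\textbf{Step 3 (strict case).} Under hypothesis \eqref{eq:strict_gap_app}, the equality in \eqref{eq:Lspec_block_app} together with $\sup_x\|\tilde J\|_2=L(t,h)$ gives the chain
\[
L_{\mathrm{spec}}=\sup\|P\tilde JP^\top\|_2<\sup\|\tilde J\|_2=L(t,h).
\]

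\textbf{Main obstacle.} The only delicate point is the middle equality in \eqref{eq:Lspec_block_app}. Reparametrizing by $x=C^\top P^\top z$ only covers the low-frequency subspace, not all of $\mathbb{R}^d$. If ``$\sup_x$'' is meant over all of $\mathbb{R}^d$, the equality can fail, but the resulting $\le$ still suffices for both conclusions. For the strict claim, the hypothesis then has to be read as being about the same sup that defines $L_{\mathrm{spec}}$.

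The fallback is to prove $L_{\mathrm{spec}}=\sup_{x\in\mathrm{range}(C^\top P^\top)}\|P\tilde JP^\top\|_2\le\sup_{x\in\mathbb{R}^d}\|P\tilde JP^\top\|_2$. With this, the strict inequality follows from \eqref{eq:strict_gap_app} whichever reading is intended.
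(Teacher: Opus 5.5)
Your proposal is correct and follows the paper's own proof: the chain rule for the Jacobian, $\|PAP^\top\|_2\le\|A\|_2$ for the bound, and the strict case read directly off the hypothesis. You also make explicit two points the paper's one-line proof glosses over: the middle equality holds only with $\sup_x$ taken over $\mathrm{range}(C^\top P^\top)$ (over all of $\mathbb{R}^d$ it is only $\le$, which still suffices for both conclusions), and $\|CJC^\top\|_2=\|J\|_2$ is needed to identify $\sup_x\|\tilde J\|_2$ with $L(t,h)$.
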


\begin{proof}
Equation~\eqref{eq:gradg_app} follows from the chain rule applied to~\eqref{eq:g_def_app}, using that $PC$ and $C^\top P^\top$ are constant matrices. The identity in~\eqref{eq:Lspec_block_app} follows by taking operator norms and suprema. The inequality in~\eqref{eq:Lspec_block_app} uses $\|PAP^\top\|_2 \le \|A\|_2$ for any matrix $A$ (since $P$ and $P^\top$ have operator norm $1$). Finally, the strict claim is immediate from~\eqref{eq:strict_gap_app}.
\end{proof}

\begin{corollary}[Implication for solver step size]
\label{cor:step_app}
For an explicit Euler step $\Phi_{\Delta t}(x)=x+\Delta t\,v_\theta(x,t,h)$, its Lipschitz factor is bounded by $1+\Delta t\,L(t,h)$. For the restricted dynamics, the corresponding bound is $1+\Delta t\,L_{\mathrm{spec}}(t,h)$. Hence, whenever $L_{\mathrm{spec}}(t,h) < L(t,h)$, the same Lipschitz-based stability criterion permits a larger $\Delta t$, reducing the number of solver steps at inference.
\end{corollary}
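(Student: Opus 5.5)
The corollary follows from a one-line Lipschitz estimate for the Euler map, applied twice: once to the full field $v_\theta$ on $\mathbb{R}^d$ and once to the restricted field $g$ on $\mathbb{R}^m$. The constants come from Lemma~\ref{lem:spec-nonexpansive-stream} and Theorem~\ref{thm:spec-strict-reduction-stream}.

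\textbf{Full dynamics.} Fix $(t,h)$ and $\Delta t>0$. For $x,x'$ the triangle inequality gives
\[
\|\Phi_{\Delta t}(x)-\Phi_{\Delta t}(x')\|_2 \le \|x-x'\|_2+\Delta t\,\|v_\theta(x,t,h)-v_\theta(x',t,h)\|_2 \le (1+\Delta t\,L(t,h))\|x-x'\|_2 .
\]
The last step needs $v_\theta(\cdot,t,h)$ to be $L(t,h)$-Lipschitz. Under the differentiability assumption of Theorem~\ref{thm:spec-strict-reduction-stream}, this follows from the mean value inequality along the segment $[x',x]$. That segment lies in $\mathbb{R}^d$, which is convex, and along it $\|J\|_2\le \sup_x\|J(x,t,h)\|_2=L(t,h)$.

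\textbf{Restricted dynamics.} Define $\Phi^{\mathrm{spec}}_{\Delta t}(z)=z+\Delta t\,g(z,t,h)$ on $\mathbb{R}^m$. The same triangle-inequality argument applies with $g$ in place of $v_\theta$. The mean value inequality, combined with identity \eqref{eq:Lspec_block_app} ($\sup_z\|\nabla_z g\|_2=L_{\mathrm{spec}}(t,h)$), gives the factor $1+\Delta t\,L_{\mathrm{spec}}(t,h)$. By Theorem~\ref{thm:spec-strict-reduction-stream}, $L_{\mathrm{spec}}\le L$. If the Jacobian is not assumed, Lemma~\ref{lem:spec-nonexpansive-stream} still gives the bound with $L$.

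\textbf{Step-size conclusion.} I will formalize a ``Lipschitz-based stability criterion'' as the requirement that the per-step amplification satisfy $1+\Delta t\,\Lambda\le \kappa$ for a fixed tolerance $\kappa>1$, where $\Lambda$ is the relevant Lipschitz constant. A natural alternative is requiring $\Delta t\,\Lambda\le c$ for some constant $c$; the argument is identical. The largest admissible step is then $\Delta t^\ast=(\kappa-1)/\Lambda$, which is strictly decreasing in $\Lambda$. So $L_{\mathrm{spec}}<L$ yields $\Delta t^\ast_{\mathrm{spec}}>\Delta t^\ast$. Covering $[0,1]$ then needs $N=\lceil 1/\Delta t^\ast\rceil$ steps, which is non-increasing in $\Delta t^\ast$, and strictly fewer once the gap is large enough to cross an integer. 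The degenerate case $L_{\mathrm{spec}}=0$ is handled separately: any step size is admissible.

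\textbf{Main obstacle.} The main difficulty is not the estimates but the interpretation. The corollary's word ``permits'' hides a choice of criterion, so I will state the criterion explicitly. I will also note that ``reducing the number of solver steps'' holds in the non-strict sense, as the ceiling shows, and that the conclusion is about this upper bound, not about actual global error.
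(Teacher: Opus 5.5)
Your proposal is correct and follows the paper's proof, which is exactly the triangle-inequality estimate $\|\Phi_{\Delta t}(x)-\Phi_{\Delta t}(x')\|_2\le(1+\Delta t\,L(t,h))\|x-x'\|_2$, with the restricted case obtained by substituting $g$ and $L_{\mathrm{spec}}(t,h)$. You go beyond the paper in two places: the mean-value justification that the Jacobian suprema are Lipschitz constants, and the explicit stability criterion with its ceiling caveat; the paper leaves the step-size conclusion as an informal remark.
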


\begin{proof}
For any $x,x'$,
\begin{equation}
\|\Phi_{\Delta t}(x)-\Phi_{\Delta t}(x')\|_2
\le
\|x-x'\|_2 + \Delta t\,\|v_\theta(x,t,h)-v_\theta(x',t,h)\|_2
\le
\bigl(1+\Delta t\,L(t,h)\bigr)\|x-x'\|_2.
\label{eq:euler_lip_app}
\end{equation}
The restricted case follows by replacing $v_\theta$ with $g$ and $L(t,h)$ with $L_{\mathrm{spec}}(t,h)$.
\end{proof}

Lemma~\ref{lem:spec-nonexpansive-stream} shows that orthogonal cosine projection and low-frequency restriction are non-expansive, so the Lipschitz constant cannot increase. Theorem~\ref{thm:spec-strict-reduction-stream} refines this by identifying the restricted Jacobian as the low-frequency block $P\tilde{J}P^\top$; a strict reduction arises whenever the maximal Jacobian gain of $\tilde{J}$ cannot be realized within the low-frequency subspace~\eqref{eq:strict_gap_app}. This explains why low-frequency spectral restriction typically yields smoother ODE dynamics and supports larger solver steps (Corollary~\ref{cor:step_app}) in deterministic inference.

\subsection{Low-Frequency Stabilization and Progressive Spectral Refinement}
\label{sec:fixed_lowfreq}

A key design choice in \textit{SpecFlow} is to represent intermediate visual thoughts in the cosine domain, where low-frequency coefficients capture the global spatial structure most relevant to reasoning, including layout, topology, connectivity, and relative geometry. Since intermediate visual states mainly serve as reasoning scaffolds rather than photorealistic outputs, emphasizing low-frequency components provides a compact and stable visual workspace.

Based on this observation, \textit{SpecFlow} adopts a spectral-progressive refinement strategy. During each flow-based update, the active spectral mask starts from low-frequency coefficients and gradually expands to include mid- and high-frequency bands when additional spatial precision is useful. This coarse-to-fine process preserves the stability of low-frequency reasoning while avoiding the information loss that may occur if the workspace is restricted to a fixed low-frequency subspace throughout the entire update.

This design balances efficiency and reasoning fidelity. Because only selected spectral coefficients are active at each integration step, flow matching operates in a smoother and more constrained space, reducing unnecessary computation compared with dense visual-state updates. Meanwhile, the expandable spectral budget allows the model to recover finer spatial cues when needed, while still preventing visual-token accumulation and maintaining predictable memory usage.

The fixed low-frequency variant is therefore an efficiency-oriented special case of \textit{SpecFlow}. It keeps only the lowest-frequency coefficients active throughout the flow trajectory, which can be sufficient for simple spatial reasoning tasks but may discard useful mid-frequency cues in more complex settings. Thus, our default design uses spectral-progressive refinement, while the fixed low-frequency setting serves as an ablation illustrating the trade-off between minimal computation and spatial fidelity.

\section{Algorithm}

\begin{algorithm}[t]
\caption{\textit{SpecFlow} multi-hop inference with spectral-progressive cosine-space flow matching and CFG}
\label{alg:specflow_inference}
\begin{algorithmic}[1]
\Require Initial visual input $x_{\mathrm{vis}}$, initial textual query $x_{\mathrm{txt}}$, hops $H$
\Require Block Cosine Projection / inverse: $D_b(\cdot)$, $D_b^{-1}(\cdot)$; mask schedule $M(t)\in\{0,1\}^{b\times b}$ for $t\in[0,1]$
\Require Velocity model $u_\theta(\cdot,t,c)$ supporting conditional $c$ and unconditional $\varnothing$
\Require Guidance scale $w>0$, Euler steps $T$ (fixed-step solver), AR text generator $G_t(\cdot)$
\Ensure Textual thoughts $\{\hat{t}_i\}_{i=1}^H$, visual states $\{\hat{v}_i\}_{i=1}^H$
\State Initialize textual trace $\hat{t}_{\le 0}\leftarrow \emptyset$
\State Initialize visual state $\hat{v}_0 \leftarrow x_{\mathrm{vis}}$
\For{$i=0$ to $H-1$}
    \State Form hop conditioning $c_i \leftarrow (x_{\mathrm{txt}}, \hat{t}_{\le i})$ \Comment{\textcolor{magenta}{query + accumulated text trace}}
    \State Set coefficients $X_i^{(0)} \leftarrow D_b(\hat{v}_i)$
    \State Set step size $\Delta \leftarrow 1/T$
    \For{$k=0$ to $T-1$}
        \State $t_k \leftarrow k/T$
        \State $\tilde{X}^{(k)} \leftarrow M(t_k)\odot X^{(k)}$ \Comment{\textcolor{magenta}{spectral filtering (broadcast over blocks)}}
        \State $u_{\mathrm{uncond}} \leftarrow u_\theta(\tilde{X}^{(k)}, t_k, \varnothing)$
        \State $u_{\mathrm{cond}} \leftarrow u_\theta(\tilde{X}^{(k)}, t_k, c_i)$
        \State $u_{\mathrm{guid}} \leftarrow u_{\mathrm{uncond}} + w\big(u_{\mathrm{cond}}-u_{\mathrm{uncond}}\big)$ \Comment{\textcolor{magenta}{CFG on velocity}}
        \State $X^{(k+1)} \leftarrow X^{(k)} + \Delta \, u_{\mathrm{guid}}$ \Comment{\textcolor{magenta}{Euler update of ODE}}
    \EndFor
    \State $\hat{v}_{i+1} \leftarrow D_b^{-1}\!\big(X^{(T)}\big)$ \Comment{\textcolor{magenta}{visual thought / overwritten visual workspace}}
    \State $\hat{t}_{i+1} \sim G_t\!\big(x_{\mathrm{txt}}, \hat{t}_{\le i}, \hat{v}_{i+1}\big)$ \Comment{\textcolor{magenta}{AR text conditioned on current visual state}}
\EndFor
\end{algorithmic}
\end{algorithm}


\begin{algorithm}[t]
\caption{Training $u_\theta$ by cosine-space flow matching with spectral masking and condition dropout}
\label{alg:specflow_training}
\begin{algorithmic}[1]
\Require Data sample $x_0$ (image / visual state), condition $c$ (e.g., hop text), dropout prob. $p_{\mathrm{drop}}$
\Require Block Cosine Projection: $D_b(\cdot)$; mask schedule $M(t)$; velocity model $u_\theta$
\Ensure Updated parameters $\theta$
\State Sample noise $x_1 \sim \mathcal{N}(0,I)$
\State Transform endpoints: $X_0 \leftarrow D_b(x_0)$,\;\; $X_1 \leftarrow D_b(x_1)$
\State Sample $t \sim \mathcal{U}(0,1)$
\State Interpolate in coefficient space: $X_t \leftarrow (1-t)X_1 + tX_0$
\State Target velocity: $\dot{X}_t \leftarrow X_0 - X_1$
\State With prob. $p_{\mathrm{drop}}$, set conditioning $\tilde{c}\leftarrow \varnothing$ else $\tilde{c}\leftarrow c$
\State Predict: $\hat{u} \leftarrow u_\theta\!\big(M(t)\odot X_t,\; t,\; \tilde{c}\big)$
\State Compute loss: $\mathcal{L}_{\mathrm{FM}} \leftarrow \|\hat{u}-\dot{X}_t\|_2^2$
\State Update parameters $\theta$ using $\nabla_\theta \mathcal{L}_{\mathrm{FM}}$
\end{algorithmic}
\end{algorithm}

Algorithm~\ref{alg:specflow_inference} summarizes the multi-hop inference procedure of \textit{SpecFlow}. 
At hop $i$, we first construct the hop-level condition $c_i$ by concatenating the original textual query $x_{\mathrm{txt}}$ with the accumulated textual trace $\hat{t}_{\le i}$. 
We then initialize the cosine-space visual workspace from the previous visual state by projecting it into blockwise cosine coefficients, yielding $X_i^{(0)} \leftarrow D_b(\hat{v}_i)$.
To generate the hop-specific visual thought, we evolve the cosine coefficients using a fixed-step ODE solver with $T$ Euler steps. 
At each step $t_k = k/T$, a time-dependent spectral mask $M(t_k)$ is applied to restrict updates to the active cosine subspace, producing $\tilde{X}^{(k)} = M(t_k)\odot X^{(k)}$. 
The masked coefficients are then processed by the learned velocity field $u_\theta(\cdot,t,c)$ under both unconditional and conditional contexts. 
We combine these velocities using classifier-free guidance (CFG) in velocity space to control cross-modal alignment, as defined in Equation~\ref{eq:cfg_velocity}. 
We then perform an Euler update of the cosine coefficients,
\begin{equation}
X^{(k+1)} \;=\; X^{(k)} + \Delta\, u_{\theta}^{\mathrm{guid}}\!\left(X^{(k)}, t_k; c_i\right),
\qquad \Delta = 1/T,
\label{eq:euler_update_app}
\end{equation}
and repeat this update for $k=0,\dots,T-1$. 
After completing $T$ steps, we invert the block cosine projection to obtain the hop visual state $\hat{v}_{i+1} = D_b^{-1}(X^{(T)})$. 
Finally, the next textual thought $\hat{t}_{i+1}$ is generated autoregressively by conditioning the text generator on the query, the accumulated textual trace, and the current visual state, i.e., $\hat{t}_{i+1} \sim G_t(x_{\mathrm{txt}}, \hat{t}_{\le i}, \hat{v}_{i+1})$. 
Because the visual workspace is overwritten at each hop, both memory usage and computation remain bounded regardless of reasoning depth.

Algorithm~\ref{alg:specflow_training} describes how the cosine-space velocity field $u_\theta$ is trained using flow matching with condition dropout. 
Given a target visual sample $x_0$, we first sample a Gaussian noise image $x_1 \sim \mathcal{N}(0,I)$ and project both endpoints into blockwise cosine coefficients, $(X_0,X_1)=(D_b(x_0),D_b(x_1))$. 
We then sample a continuous time variable $t \sim \mathcal{U}(0,1)$ and linearly interpolate the coefficients as $X_t = (1-t)X_1 + tX_0$. 
Under this interpolation, the target velocity remains constant and equals $\dot{X}_t = X_0 - X_1$. 
The same spectral mask $M(t)$ is applied to the model input, and conditioning is randomly dropped with probability $p_{\mathrm{drop}}$ by setting $\tilde{c} \leftarrow \varnothing$, enabling CFG during inference. 
The model is trained to regress the target velocity using the standard flow-matching objective defined in Equation~\ref{eq:flow_loss}. 
Model parameters $\theta$ are then updated by backpropagating the gradient $\nabla_\theta \mathcal{L}_{\mathrm{FM}}$. 
Together, these two algorithms enable hop-wise cosine-space visual thought generation guided by text while maintaining bounded visual state size and stable efficiency across long reasoning horizons.

\section{Additional Experimental Results}
\label{sec:additional_results}

\subsection{Impact of block size on reasoning quality}
\label{sec:block_size}

\begin{figure}[t]
    \centering
    \includegraphics[width=0.72\linewidth]{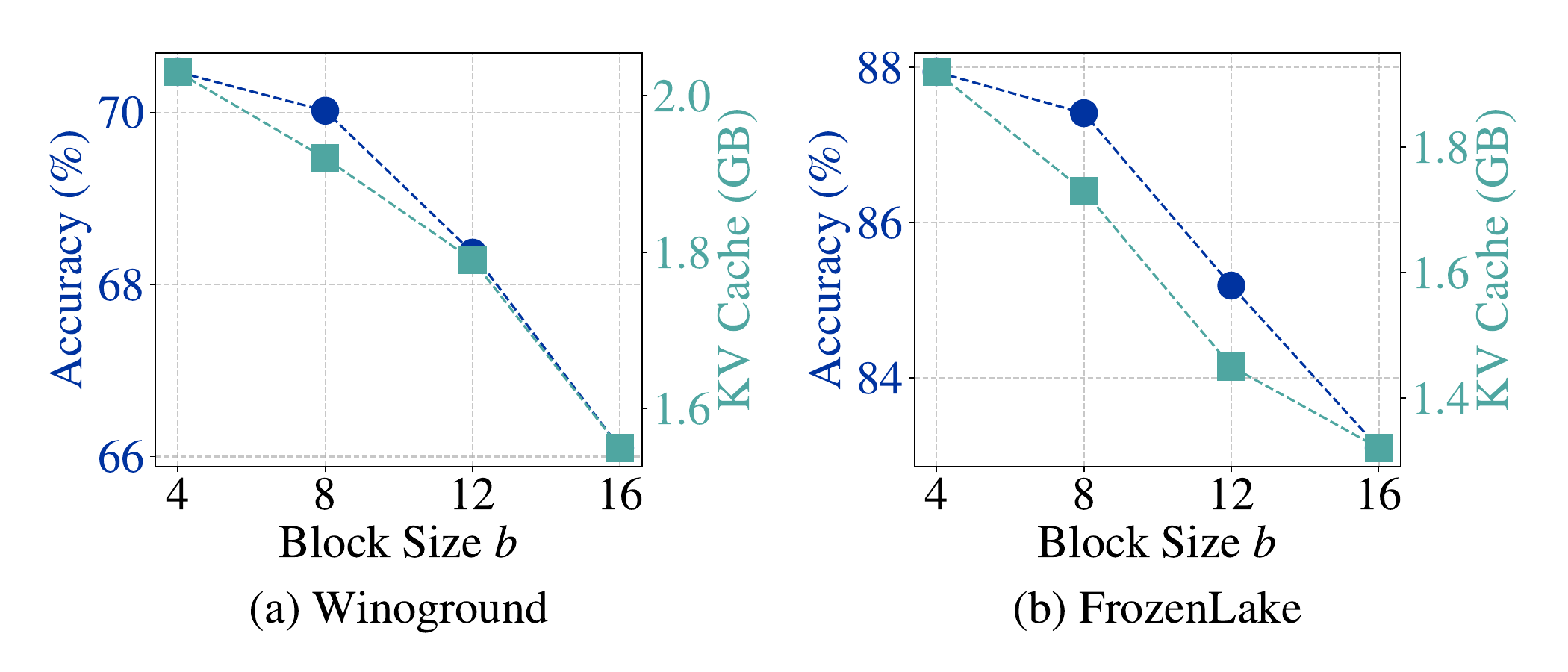}
    \caption{\textbf{Effect of spectral block size $b$ on reasoning accuracy and KV-cache usage.} We vary the block size used for block-wise cosine projection and report task accuracy (left y-axis) together with the total KV cache footprint (right y-axis) on (a) Winoground and (b) FrozenLake. Smaller blocks preserve finer local structure and yield higher accuracy, but increase the KV cache due to higher token granularity. Larger blocks reduce the cache footprint, but introduce coarser spectral aggregation that degrades reasoning quality. The standard JPEG choice $b{=}8$ provides a strong accuracy--memory trade-off across both tasks.}
    \label{fig:blocksize}
\end{figure}

Figure~\ref{fig:blocksize} studies how the block size $b$ in block-wise cosine projection trades off reasoning quality against memory consumption. When $b$ increases from $4$ to $16$, the KV cache consistently decreases on both Winoground and FrozenLake, indicating that coarser block partitioning reduces the effective token granularity and thus lowers attention-state storage. However, this memory reduction comes with a clear accuracy drop, since larger blocks mix spatial content over wider regions and weaken the locality needed to preserve fine-grained spatial relations during intermediate thought updates. In contrast, very small blocks better retain local structure and improve accuracy, but they increase the cache footprint and erode the memory advantages of our design. Across both tasks, $b{=}8$ achieves the best balance, which is consistent with the long-standing JPEG heuristic and suggests that an $8{\times}8$ cosine block provides sufficient locality for spatial reasoning while keeping the KV cache modest.

\subsection{Prompt and visualization of Maze}
\label{sec:Maze_prompt_vis}

\paragraph{Maze prompt template.}
We use a lightweight, hop-wise controller prompt that (i) parses the current Maze observation, (ii) produces a short textual guidance string to steer \textit{SpecFlow}'s flow-based visual-thought update via classifier-free guidance (CFG), and (iii) emits an aggregated action plan once the route is determined. Concretely, at hop $i$ the controller outputs a one-sentence guidance \texttt{GUIDE} and a short action prefix \texttt{A\_PREFIX} of 1--6 moves, and the model updates a compact visual workspace by running a small number of ODE steps under this guidance. The procedure repeats until the controller terminates with a single \texttt{ACTION\_PLAN} string, which is used as the final answer. This design ensures that only the current workspace and a short textual trace need to be kept in context, avoiding the unbounded growth of dense multimodal tokens.

\begin{tcolorbox}[
  enhanced,
  sharp corners,
  colback=green!6,
  colframe=green!45!black,
  boxrule=0.6pt,
  left=6pt,right=6pt,top=6pt,bottom=6pt
]
\small
\textbf{System:} You are a professional Maze solver. Your goal is to reach the target without stepping into obstacles.\\
\textbf{Inputs:} (1) Current Maze image. (2) Allowed actions: \{L,R,U,D\}.\\
\textbf{Task:} Solve the Maze by planning a safe path from the start to the goal.\\[2pt]
\textbf{\textit{SpecFlow} interface:} At hop $i$, output a short guidance text \texttt{GUIDE} that will be used as classifier-free guidance to update a compact visual workspace. You will then receive the updated workspace image for the next hop.\\[4pt]
\textbf{Instructions:}
(1) Identify the start, goal, obstacles, and free regions from the current image/workspace.\\
(2) Choose a short subgoal that moves closer to the goal while avoiding obstacles.\\
(3) Output \texttt{GUIDE} as one concise sentence describing only the subgoal and safety constraints.\\
(4) Output \texttt{A\_PREFIX} as the next 1--6 actions consistent with \texttt{GUIDE}.\\
(5) When the full route is determined, output \texttt{ACTION\_PLAN} as a single concatenated action string and stop.\\[4pt]
\textbf{Output format (return exactly one of the following):}\\
\texttt{GUIDE: <one sentence>}\\
\texttt{A\_PREFIX: <actions>}\\
\texttt{ACTION\_PLAN: <concatenated actions>}
\end{tcolorbox}

\paragraph{Visualization protocol.}
To visualize multi-hop reasoning, we render each intermediate visual thought (workspace) as an overlay on the Maze and draw the partial trajectory accumulated up to hop $i$ as a red polyline. After each hop, we decode the current workspace into a grid-aligned map, overlay the current planned segment, and append it to the previously drawn prefix to form the trajectory shown in the next panel. The last panel corresponds to the complete plan represented by \texttt{ACTION\_PLAN}. This visualization directly reflects the key property of \textit{SpecFlow}: intermediate spatial states are updated by overwriting a bounded workspace, which enables long-horizon planning without appending a growing sequence of intermediate images into the autoregressive context.

\begin{figure}[t]
    \centering
    \includegraphics[width=\linewidth]{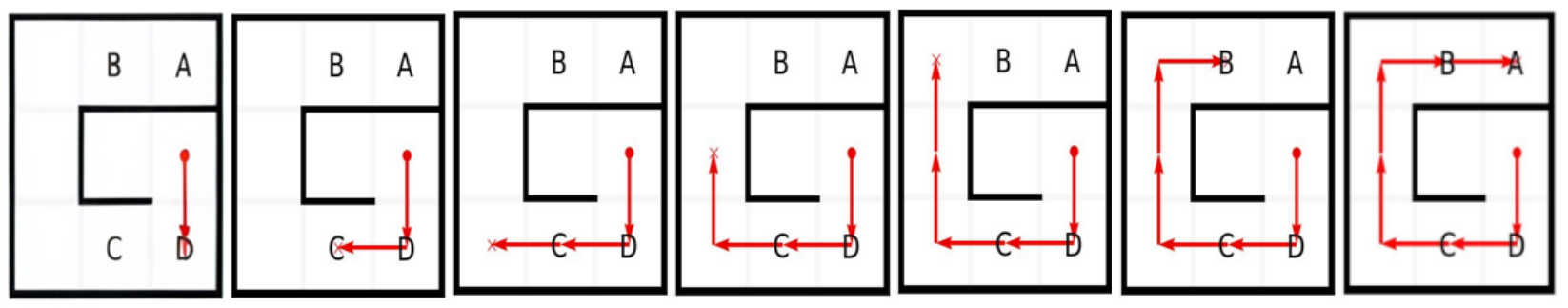}
    \caption{\textbf{Success case of multi-hop Maze planning with \textit{SpecFlow}.} Each panel shows the intermediate visual workspace at one hop, overlaid with the current planned trajectory in red. The route is progressively extended while preserving global Maze geometry, and the final hop yields a coherent collision-free path that reaches the goal.}
    \label{fig:Maze}
\end{figure}

\begin{figure}[t]
    \centering
    \includegraphics[width=\linewidth]{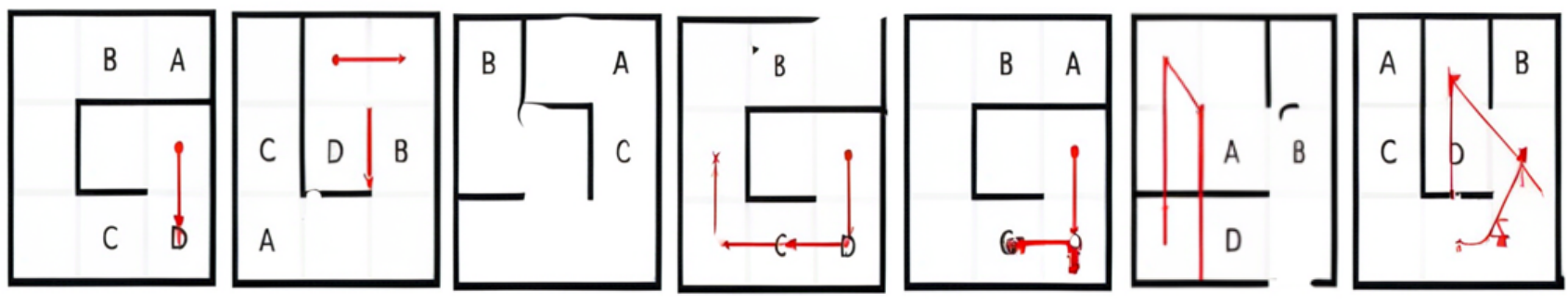}
    \caption{\textbf{Failure case under insufficient text alignment (under-aligned CFG).}
    In \textit{SpecFlow}, the CFG scale $w$ controls how strongly textual guidance constrains the flow update of the visual workspace. 
    When the model is under-aligned, the workspace update is not sufficiently anchored by the intended route and constraints, and the generator may drift toward visually plausible but incorrect structures, producing spurious walls/openings or inconsistent path segments (``artificial facts''). 
    This suggests that Maze planning requires adequate alignment to keep the workspace faithful to the intended constraints, motivating careful tuning of $w$ and hop-wise guidance scheduling.}
    \label{fig:Maze_halu}
\end{figure}

\subsection{MiniBehavior: prompt and visualization}
\label{sec:MiniBehavior_prompt_vis}

\paragraph{MiniBehavior prompt template.}
Different from Maze, MiniBehavior requires executing a grounded \emph{object-manipulation} plan with preconditions, where the agent must first \emph{fetch} a target object and then \emph{place} it at a designated receptacle. We therefore use a hop-wise controller prompt that (i) parses the current grid observation and the inventory state, (ii) outputs a concise guidance sentence to steer \textit{SpecFlow}'s visual-workspace update via CFG, and (iii) emits an aggregated action plan once the full sequence is determined. In our running example, the goal is to fetch the \texttt{printer} and place it onto the \texttt{table}. We visualize the intermediate plan as a red rectangle that marks the current subgoal region, which first locks onto the printer location and then switches to the table location after pickup.

\begin{tcolorbox}[
  sharp corners,
  colback=green!6,
  colframe=green!45!black,
  boxrule=0.6pt,
  left=6pt,right=6pt,top=6pt,bottom=6pt
]
\small
\textbf{System:} You are a professional MiniBehavior agent. You must complete the task by moving and interacting safely and efficiently.\\
\textbf{Inputs:} (1) Current MiniBehavior grid image. (2) Allowed actions: \{Left, Right, Up, Down, Pickup, Drop, Toggle\}. (3) Current inventory (empty or holding an object).\\
\textbf{Task:} Fetch the \texttt{printer}, then place the \texttt{printer} on the \texttt{table}.\\[2pt]
\textbf{\textit{SpecFlow} interface:} At hop $i$, output a short guidance text \texttt{GUIDE} used as classifier-free guidance to update a compact visual workspace. You will then receive the updated workspace image for the next hop.\\[4pt]
\textbf{Instructions:}
(1) Identify the agent position, the \texttt{printer}, the \texttt{table}, obstacles, and free cells from the current image/workspace.\\
(2) Determine the current stage based on inventory: if not holding the printer, the next subgoal is \emph{reach and pick up the printer}; otherwise the next subgoal is \emph{reach and place the printer on the table}.\\
(3) Output \texttt{GUIDE} as one concise sentence that specifies the next subgoal and necessary interaction (\texttt{Pickup} or \texttt{Drop}), and includes safety constraints (avoid obstacles, stay in bounds).\\
(4) Output \texttt{A\_PREFIX} as the next 1--7 actions consistent with \texttt{GUIDE}.\\
(5) When the entire plan is determined, output \texttt{ACTION\_PLAN} as a single concatenated action sequence using the tokens: \texttt{L,R,U,D,PICK,DROP,TOG}.\\[4pt]
\textbf{Output format (return exactly one of the following):}\\
\texttt{GUIDE: <one sentence>}\\
\texttt{A\_PREFIX: <tokens separated by commas>}\\
\texttt{ACTION\_PLAN: <tokens separated by commas>}
\end{tcolorbox}

\paragraph{Visualization protocol.}
We render each intermediate \textit{SpecFlow} workspace as an overlay on the grid observation and highlight the current subgoal with a red rectangle. In the fetch stage, the red rectangle focuses on the printer region, indicating that the workspace has localized the target object and is planning a collision-free route toward it. After pickup, the subgoal switches and the red rectangle moves to the table region, reflecting the transition to the placement stage. This two-stage evolution provides a direct qualitative signal that \textit{SpecFlow} maintains a bounded workspace while supporting structured, long-horizon manipulation plans.

\begin{figure}[t]
    \centering
    \includegraphics[width=\linewidth]{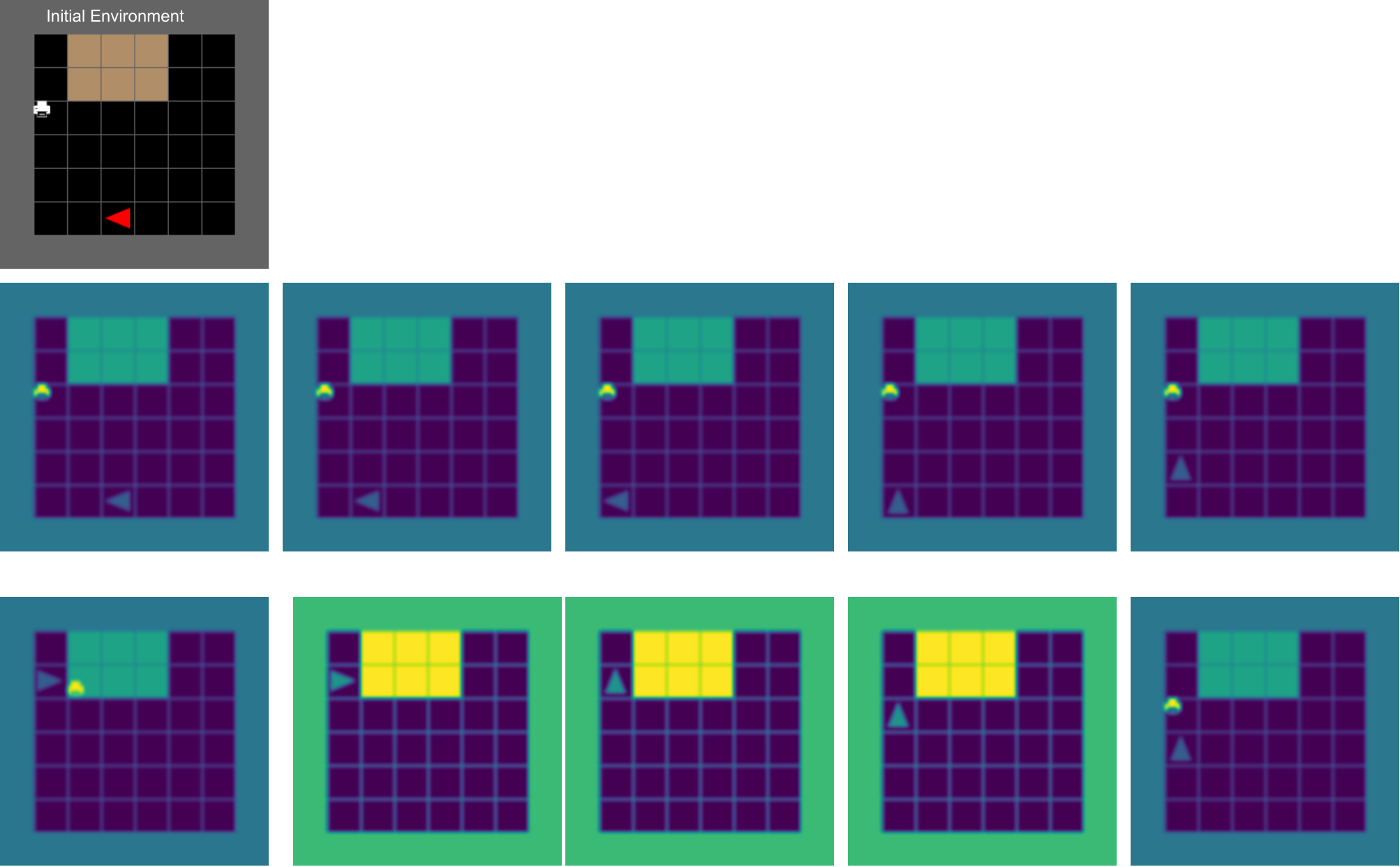}
    \caption{\textbf{Qualitative MiniBehavior example: fetch then place.}
    We visualize multi-hop visual thoughts for a two-stage manipulation task. The red rectangle denotes the current subgoal region encoded in the workspace, which first targets the \texttt{printer} for pickup and then switches to the \texttt{table} for placement. The sequence illustrates how \textit{SpecFlow} overwrites a compact workspace to track subgoals and progress, enabling multi-step execution without accumulating dense intermediate visual tokens in the autoregressive context.}
    \label{fig:MiniBehavior}
\end{figure}

\subsection{FrozenLake: prompt and visualization}
\label{sec:FrozenLake_prompt_vis}

\paragraph{FrozenLake prompt template.}
We adopt the same hop-wise controller design as in the Maze and MiniBehavior settings, but specialize the instructions to FrozenLake, where the key constraint is to avoid slipping into holes while reaching the goal. At each hop $i$, the controller parses the current grid observation, produces a concise guidance sentence \texttt{GUIDE} to steer \textit{SpecFlow}'s flow-based workspace update via CFG, and emits a short action prefix \texttt{A\_PREFIX}. The process terminates once the controller is confident about the full route and outputs an aggregated \texttt{ACTION\_PLAN}. This prompt structure aligns naturally with \textit{SpecFlow}'s bounded-workspace reasoning, since the controller only conditions on the current observation and the current workspace snapshot.

\begin{tcolorbox}[
  sharp corners,
  colback=green!6,
  colframe=green!45!black,
  boxrule=0.6pt,
  left=6pt,right=6pt,top=6pt,bottom=6pt
]
\small
\textbf{System:} You are a professional FrozenLake solver. Your goal is to reach the gift (goal) without stepping onto ice holes.\\
\textbf{Inputs:} (1) Current FrozenLake grid image. (2) Allowed actions: \{L,R,U,D\}.\\
\textbf{Safety rule:} Any move onto a hole is an immediate failure. Moves outside the grid are invalid.\\
\textbf{Task:} Output a safe action plan from the start to the goal.\\[2pt]
\textbf{\textit{SpecFlow} interface:} At hop $i$, output \texttt{GUIDE} as classifier-free guidance to update a compact visual workspace. You will then receive the updated workspace image for the next hop.\\[4pt]
\textbf{Instructions:}
(1) Identify start, goal, holes, and safe cells from the current image/workspace.\\
(2) Propose a short subgoal that reduces distance to the goal while avoiding holes.\\
(3) Output \texttt{GUIDE} as one sentence that states the subgoal and the constraint ``avoid holes''.\\
(4) Output \texttt{A\_PREFIX} as the next 1--6 actions consistent with \texttt{GUIDE}.\\
(5) When the full route is determined, output \texttt{ACTION\_PLAN} as a single concatenated action string and stop.\\[4pt]
\textbf{Output format (return exactly one of the following):}\\
\texttt{GUIDE: <one sentence>}\\
\texttt{A\_PREFIX: <actions>}\\
\texttt{ACTION\_PLAN: <concatenated actions>}
\end{tcolorbox}

\paragraph{Visualization protocol.}
Figure~\ref{fig:FrozenLake_vis} visualizes the intermediate visual thoughts produced by \textit{SpecFlow} for a FrozenLake instance. We show the initial environment at the top-left, followed by a sequence of decoded workspaces across hops. Each workspace is rendered as a dense heatmap-like visualization that preserves the spatial layout while highlighting task-relevant structures, including hole locations and the evolving intended trajectory. This qualitative view makes the planning process explicit and illustrates that \textit{SpecFlow} can refine a safe path over multiple hops by repeatedly overwriting a bounded workspace, rather than accumulating intermediate images in the autoregressive context.

\begin{figure}[t]
    \centering
    \includegraphics[width=\linewidth]{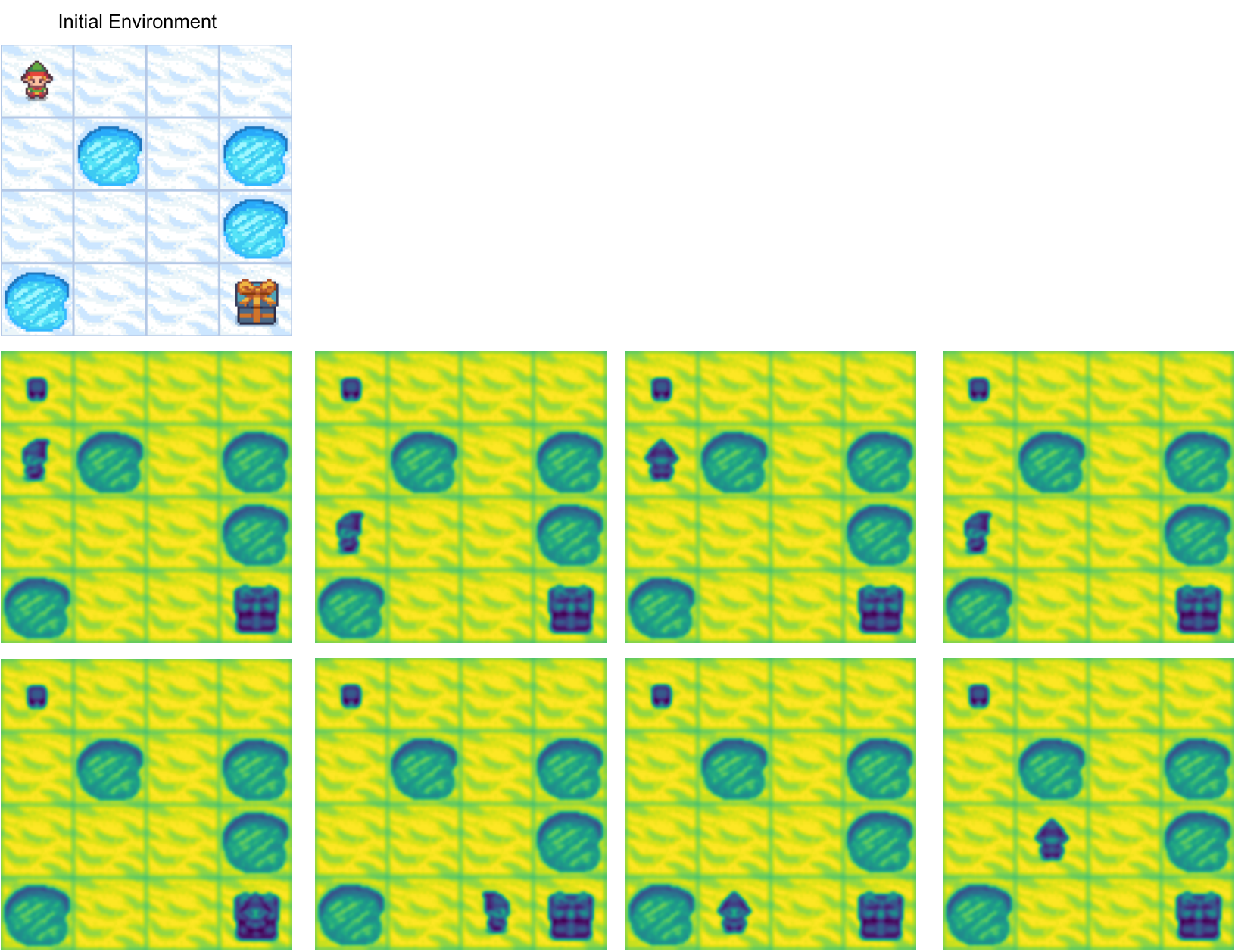}
    \caption{\textbf{FrozenLake qualitative example with multi-hop visual thoughts.}
    Top-left shows the initial environment. The remaining panels show \textit{SpecFlow}'s decoded visual workspaces across hops under the hop-wise controller prompt. The sequence progressively refines a collision-free route that avoids holes and reaches the goal, demonstrating bounded-workspace long-horizon reasoning.}
    \label{fig:FrozenLake_vis}
\end{figure}




\end{document}